\newif\ifpaper
\acrodef{HMM}{hidden Markov model}
\acrodef{KFM}{Kalman filter model}
\acrodef{CPD}{conditional probability distribution}
\acrodef{DBN}{dynamic Bayesian network}
\acrodef{DAG}{directed acyclic graph}
\acrodef{BN}{Bayesian network}
\acrodef{MDL}{minimum description length}
\acrodef{PDF}{probability density function}
\acrodef{BIC}{Bayesian information criterion}
\acrodef{AIC}{Akaike information criterion}
\acrodef{GDS}{graph dynamical system}
\acrodef{CPT}{conditional probability table}
\acrodef{EM}{expectation-maximisation}
\acrodef{RPDAG}{restricted partially directed acyclic graph}
\acrodef{ODE}{ordinary differential equation}
\DeclareMathOperator*{\argmax}{arg\,max}
\DeclareMathOperator*{\argmin}{arg\,min}
\newtheorem{theorem}{Theorem}
\newtheorem{corollary}{Corollary}[theorem]
\newtheorem{lemma}[theorem]{Lemma}
\newtheorem{definition}{Definition}
\theoremstyle{remark}
\newcommand{\oc}[1] {{\color{magenta}\textbf{(OC)} #1}}
	\newcommand{\citep}[1]{\cite{#1}}
	\title{Learning the Structure of\\Distributed Dynamical Systems}
	\title{Inferring Coupling of Distributed Dynamical Systems via Transfer Entropy}
	\author{\name Oliver M. Cliff \email o.cliff@acfr.usyd.edu.au \\
	       \addr Australian Centre for Field Robotics,\\
	       The University of Sydney,\\
	       Sydney, NSW, 2006, Australia
	       \AND
		   \name Mikhail Prokopenko \email mikhail.prokopenko@sydney.edu.au \\
	       \addr Complex Systems Research Group,\\
	       The University of Sydney,\\
	       Sydney, NSW, 2006, Australia
	       \AND
	       \name Robert Fitch \email r.fitch@acfr.usyd.edu.au\\
	       \addr Centre for Autonomous Systems,\\
	       University of Technology Sydney,\\
	       Sydney, NSW, 2007, Australia}
	\author{Oliver M.~Cliff$^*$ \And Mikhail Prokopenko$^\dagger$\And Robert Fitch$^{*,\ddagger}$\\
			  \AND \\[-5mm]
			  $^*$Australian Centre for Field Robotics,\\
			  The University of Sydney, Australia,\\
			  \texttt{o.cliff@acfr.usyd.edu.au}
			  \And \\[-5mm]
			  $^\dagger$Complex Systems Research Group,\\
			  The University of Sydney, Australia,\\
			  \texttt{mikhail.prokopenko@sydney.edu.au}
			  \AND \\[-5mm]
			  $^\ddagger$Centre for Autonomous Systems,\\
			  University of Technology Sydney, Australia,\\
			  \texttt{robert.fitch@uts.edu.au}}
\begin{document}

\maketitle

\begin{abstract}
	In this work, we are interested in structure learning for a set of spatially distributed dynamical systems, where individual subsystems are coupled via latent variables and observed through a filter. We represent this model as a directed acyclic graph (DAG) that characterises the unidirectional coupling between subsystems. Standard approaches to structure learning are not applicable in this framework due to the hidden variables, however we can exploit the properties of certain dynamical systems to formulate exact methods based on state space reconstruction. We approach the problem by using reconstruction theorems to analytically derive a tractable expression for the KL-divergence of a candidate DAG from the observed dataset. We show this measure can be decomposed as a function of two information-theoretic measures, transfer entropy and stochastic interaction. We then present two mathematically robust scoring functions based on transfer entropy and statistical independence tests. These results support the previously held conjecture that transfer entropy can be used to infer effective connectivity in complex networks.
\end{abstract}

\section{Introduction}

Complex networks are capable of modelling a wide array of important phenomena in both natural and artificial environments~\citep{boccaletti06a}. This work focuses on a particular complex network which comprises spatially distributed dynamical systems. We represent this network as a type of probabilistic graphical model termed a synchronous \ac{GDS}~\citep{mortveit07a,cliff16b}. We are interested in \emph{structure learning} for synchronous \acp{GDS} whose structure is given by a \ac{DAG}, that is, the problem of inferring directed relationships between hidden variables from an observed dataset. We propose a solution based on the concept of \emph{transfer entropy}, which is a measure that detects the directed information-theoretic dependency between random processes~\cite{schreiber00a}. Specifically, we prove, under certain technical assumptions of the system, that the maximum transfer entropy graph is the optimal information-theoretic model. We then employ this result in developing a number of mathematically robust scoring functions for the structure learning problem.

This structure learning problem has applications in a wide variety of areas due to its usefulness for performing efficient inference in discrete-time dynamical systems, in addition to understanding the system's complex structure. Dynamical systems are characterised by a map that describes their evolution over time and a read-out function through which we observe the latent state. These systems are ubiquitous in the literature due to their ability to model many real-world phenomena. Our research focuses on the more general case of a multivariate system, where a set of these subsystems are spatially distributed and unidirectionally coupled to one another. The problem of inferring this coupling is an important multidisciplinary study in fields such as multi-agent systems~\cite{cliff16a,umenberger16a}, ecology~\cite{sugihara12a}, neuroscience~\cite{vicente11a,schumacher15a}, and various others studying artificial and biological systems~\cite{boccaletti06a}.


A main challenge in structure learning for \acp{DAG} is the case where variables are unobserved. Exact methods are known for fully observable systems~\citep{daly11a}, however, these are not applicable because the state variables in dynamical systems are latent. Our goal in this paper is to exploit results from differential topology in inferring hidden coupling. Specifically, the main focus of this paper is to analytically derive a measure for comparing a candidate graph to the underlying graph that generated a measured dataset. Such a measure can then be used to solve the two subproblems that comprise structure learning, \emph{evaluation} and \emph{identification}~\citep{chickering02a}, and hence find the optimal model that explains the data.

Our approach in deriving a measure that can be used to solve the evaluation problem can be described in terms of model selection. It is desirable to select the \emph{simplest} model that incorporates all statistical knowledge. This concept is commonly expressed via information theory, where an established technique is to evaluate the encoding length of the data, given the model~\citep{akaike73a,schwarz78a,rissanen78a}. The simplest model should aim to minimise code length~\citep{lam94a}, and therefore we can simplify our problem to that of minimising KL divergence for the synchronous \ac{GDS}. Using this measure, we find a factorised distribution (given by the graph structure) that is closest to the joint distribution. We first analytically derive an expression for this divergence, and build on this result to present principled methods for evaluating candidate graphs based on a dataset.

The main result of this paper is a tractable expression of the KL divergence for synchronous \acp{GDS}. We show that this measure can be decomposed as the difference between two well-known information-theoretic measures, stochastic interaction~\cite{ay03a} and collective transfer entropy~\cite{lizier10b}. We establish this result by first representing discrete-time multivariate dynamical systems as \acp{DBN}~\citep{murphy02a}. In this form, both the joint and factorised distributions cannot be directly computed due to the hidden system state. Thus, we draw on methods from differential topology for state space reconstruction to reformulate the KL divergence in terms of computable distributions. Using this expression, we develop two scoring functions based on transfer entropy and independence tests.

The significance of this result is that it provides a rigorous foundation for model selection in synchronous \acp{GDS}. As we will show, maximising transfer entropy minimises the KL divergence in these systems. Interestingly, transfer entropy has already been used in practice for inferring effective networks~\citep{lizier12c} with encouraging empirical results. Our work lends mathematical justification to this approach under the given circumstances, and contributes to new potential applications of structure learning in robotics, complex systems analysis, and other areas.

\section{Related Work}

A complex network is a graph with non-trivial topological features that gives rise to emergent behaviour not typically seen in more traditional fields of graph theory~\citep{boccaletti06a}. This concept was popularised by the seminal work of \ifpaper\citep{watts98a}\else Watts and Strogatz~\citep{watts98a} \fi on small-world networks and \ifpaper\citep{barabasi99a}\else Barab\'{a}si and Albert~\cite{barabasi99a} \fi on scale-free networks. Since then, most of the complex network literature focuses on characterising the structure and dynamics of known biological, physical, and artificial networks~\citep{boccaletti06a}. We instead focus on the structure learning problem, a general paradigm in machine learning where the goal is to infer relationships between the variables within a system~\citep{koller09a}. In particular, we are interested in systems whereby the subsystems are unidirectionally coupled to one another. Besides complex networks, these types of systems have been introduced under a variety of more specific terms, such as spatially distributed dynamical systems~\cite{kantz04a,schumacher15a} and master-slave configurations~\cite{kocarev96a}. The defining feature of these networks is that the dynamics of each subsystem are given by a set of either discrete-time maps or first-order \acp{ODE}. In this paper we use the discrete-time formulation, where a map is obtained numerically by integrating \acp{ODE} or recording observations at discrete-time intervals~\cite{kantz04a}.

An important precursor to network reconstruction is inferring causality and coupling strength between complex nonlinear systems. In this work, we restrict our attention to methods that determine conditional independence (coupling) rather than causality; algorithms of this kind are applicable when the experimenter can not intervene with the dataset~\cite{pearl14a}. In early work, \ifpaper \cite{kolmogorov59a} \else Kolmogorov~\cite{kolmogorov59a} \fi introduced the concept of classification of dynamical systems by information rates, leading to a generalisation of entropy of an information source~\citep{sinai59a}. Following this, \ifpaper \cite{granger69a} \else Granger~\cite{granger69a} \fi proposed \emph{Granger causality} for quantifying the predictability of one variable from another. Although this measure has been used numerous times in identifying coupling, a limiting assumption of Granger causality is the key requirement of linearity, implying subsystems can be understood as individual parts~\citep{sugihara12a}. \ifpaper \cite{schreiber00a} \else Schreiber~\cite{schreiber00a} \fi extended the ideas of Granger and introduced \emph{transfer entropy} using the concept of finite-order Markov processes to quantify the information transfer between coupled nonlinear systems (although this idea was expressed earlier by \ifpaper \citep{marko73a} \else Marko~\citep{marko73a} \fi as an information-theoretic interpretation of predictability). Interestingly, it was recently shown that the two approaches are linked in linearly-coupled Gaussian systems (e.g., Kalman models~\citep{murphy02a}), where transfer entropy and Granger causality are equivalent~\citep{barnett09a}. However, there are clear distinctions between the concepts of information transfer and causal effect (see, e.g., the analysis in \ifpaper\cite{lizier10c} \else Lizier and Prokopenko~\cite{lizier10c} \fi).

Recently, a number of measures have been proposed to infer coupling between distributed dynamical systems based on state space reconstruction theorems~\cite{sugihara12a,schumacher15a,cliff16b}. \ifpaper \cite{sugihara12a} \else Sugihara et al.~\cite{sugihara12a} \fi assumed Granger's definition of causality as a quantification of predictability and proposed a method labelled convergent cross-mapping (CCM). CCM involves collecting a history of observed data from one subsystem and uses this to predict the outcome of another subsystem. This history is the delay reconstruction map described by Takens' Delay Embedding Theorem~\citep{takens81a}. Similarly, \ifpaper \cite{schumacher15a} \else Schumacher et al.~\cite{schumacher15a} \fi used the Bundle Delay Embedding Theorem~\cite{stark97a} infer causality and perform inference via Gaussian processes. Although the algorithms presented in these papers can infer driving subsystems in a spatially distributed dynamical system, the results obtained differ from ours as inference is not considered for an entire network structure, nor is a formal derivation presented. Finally, we recently presented similar work on deriving an information criterion for learning the structure of distributed dynamical systems~\cite{cliff16b}. However, the criterion we proposed was both only asymptotically optimal and required parametric modelling of the probability distributions. In this paper we extend this framework by proposing two scoring functions: one that is comparable to the information criterion presented in~\cite{cliff16b} in that it is applicable for discrete and linearly-coupled Gaussian variables; and another that allows for non-parametric density estimation techniques and thus make no assumptions about the underlying distributions.
 
A major contribution of this paper is a formal proof that maximising collective transfer entropy in a network reveals the information-theoretically optimal structure. A related line of inquiry is recovering \emph{effective networks}: networks that reveal the ``effective structure" of an observed system~\citep{sporns04a,park13a}. Using transfer entropy to infer effective networks has become a popular transdisciplinary analysis technique, e.g., in computational neuroscience~\citep{vicente11a,lizier12c}; multi-agent systems~\citep{cliff13a,cliff16a}; financial markets~\citep{sandoval14a}; supply-chain networks~\citep{rodewald15a}; and gene regulatory networks~\citep{damiani11a}. However, there is a dearth of work that provide formal derivations for the use of this measure in inferring effective structure. Most of the results build on Schreiber's work~\cite{schreiber00a} and assume the system to be composed of finite-order Markov chains; we extend this notion by showing that transfer entropy can also reveal the effective structure of distributed dynamical systems. In prior work~\cite{cliff16b}, we have connected the log-likelihood ratio of a distributed dynamical system and transfer entropy. However, in this paper we arrive at this result directly by considering the minimal code length of the graph structure and present scores based on this result.

In order to evaluate the quality of a network structure, we adopt the framework of \acp{DBN}~\citep{murphy02a}. In \ac{BN} structure learning literature, there is an already mature research topic called the \emph{evaluation problem}, which is aimed at deriving a measure that can be used to score candidate graphs, given a dataset~\citep{chickering02a}. A number of mathematically sound techniques exist for the evaluation problem in a fully observed \ac{BN}~\citep{bouckaert94a,heckerman95a,heckerman95b,buntine91a}, most of which can be readily extended to the \ac{DBN} case~\citep{friedman98a}. With hidden variables, however, these guarantees do not hold and authors will often rely on heuristics. \ifpaper \cite{russell95a} and \cite{binder97a} \else Russell et al.~\cite{russell95a} and Binder et al.~\cite{binder97a} \fi use gradient descent to find parameters with possible hidden variables, and then extended their work to continuous nodes and \acp{DBN}. \ifpaper \cite{kwoh96a} \else Kwoh and Gillies~\cite{kwoh96a} \fi use an \emph{ad hoc} method to invent hidden nodes for unexplained data. \ifpaper \cite{bishop98a} \else Bishop et al.~\cite{bishop98a} \fi focused on solutions for cases specific to a sigmoid network with mixtures. Although most methods for structure learning are aimed at finding a local maxima, \ifpaper \cite{chickering97a} \else Chickering et al.~\cite{chickering97a} \fi propose using the decomposability of the functions for efficient Monte Carlo methods that avoid this caveat. In general, the above measures are derived for general \acp{BN} without any assumptions on the structure, and give only heuristic solutions. Our approach is derived specifically for multivariate dynamical systems, and we are thus afforded simplifying assumptions that allow us to develop a mathematically rigorous solution. Interestingly, the analogous concept of maximising mutual information has been previously derived as a measure to recover fully observed \acp{BN}~\citep{lam94a,bouckaert94a,campos06a} and \acp{DBN}~\citep{vinh11b}.

\section{Background} \label{sec:background}

\subsection{Notation}

In this work we consider a collection of stationary stochastic temporal processes $\boldsymbol{Z}$. Each process $Z^i$ comprises a sequence of random variables $(Z^i_{1},Z^i_2,\ldots,Z^i_N)$ with realisation $(z^i_1,z^i_2,\ldots,z^i_N)$ for countable time indices $n\in\mathbb{N}$. Given these processes, we can compute probability distributions of each variable by counting relative frequencies or by density estimation techniques~\citep{kozachenko87a,kraskov04a,victor02a}.\footnote{To simplify notation, the variables in this work appear as discrete random variables. There is no restriction on these being continuous variables; we can simply replace all sums with an integral. Obviously, this would require different density estimators as referenced here.} We use bold to denote the set of all variables, e.g., $\boldsymbol{z}_n=\langle z_n^1, z_n^2, \ldots, z_n^M \rangle$ is the collection of $M$ realisations at index $n$. Further, unless otherwise stated, $X^i_n$ is a latent (hidden) variable, $Y^i_n$ is an observed variable, and $Z^i_n$ is an arbitrary variable; thus, $\boldsymbol{Z}_n=\{\boldsymbol{X}_n,\boldsymbol{Y}_n\}$ is the set of all hidden and observed variables at temporal index $n$. Given a graphical model $G$, the $p^i$ parents of variable $Z^i_{n+1}$ is given by the parent set $\Pi_G(Z^i_{n+1}) = \langle Z^{ij}_n \rangle_j = \langle Z^{i1}_n, Z^{i2}_n,\ldots, Z^{ip^i}_n \rangle$. Finally, let the superscript $z^{i,(k)}_n = \langle z^{i}_n, z^{i}_{n-1}, \ldots, z^{i}_{n-k+1} \rangle$ denote the vector of $k$ previous values taken by variable $Z^i_n$.

\subsection{Learning Nonlinear Dynamical Networks}

We are interested in modelling discrete-time multivariate dynamical systems, where the state is a vector of real numbers given by a point $\boldsymbol{x}_n$ lying on a compact $d$-dimensional manifold $\mathcal{M}$. A map $f:\mathcal{M} \to \mathcal{M}$ describes the temporal evolution of the state at any given time, such that the state at the next time index $\boldsymbol{x}_{n+1} = f( \boldsymbol{x}_n )$. Furthermore, in many practical scenarios, we do not have access to $\boldsymbol{x}_n$ directly, and can instead observe it through a \emph{measurement function} $\psi : \mathcal{M} \to \mathbb{R}^M$ that yields a scalar representation $\boldsymbol{y}_n = \psi( \boldsymbol{x}_n )$ of the latent state~\citep{stark97a,kantz04a}. We assume the multivariate system can be factorised and modelled as a \ac{DAG} with spatially distributed dynamical subsystems, termed a synchronous \ac{GDS}. This definition is restated from~\cite{cliff16b} as follows.
\begin{definition}[Synchronous graph dynamical system (GDS)] \label{def:graph-dynamical-system}
	A synchronous \ac{GDS} is a tuple $(G,\boldsymbol{x}_n,\boldsymbol{y}_n,\{ f^{i} \}, \{ \psi^i \})$ that consists of:
	\begin{itemize}
		\item a finite, directed graph $G=(\mathcal{V},\mathcal{E})$ with edge-set $\mathcal{E} = \{E^i\}$ and $M$ vertices comprising the vertex set $\mathcal{V}=\{V^i\}$;
		\item a multivariate state $\boldsymbol{x}_n=\langle x^i_n\rangle$, composed of states for each vertex $V^i$ confined to a $d^i$-dimensional manifold $x^i_n \in \mathcal{M}^{i}$;
		\item an $M$-variate observation $\boldsymbol{y}_n=\langle y^i_n\rangle$, composed of scalar observations for each vertex $y^i_n \in \mathbb{R}$;
		\item a set of local maps $\{ f^i \}$ of the form $f^i:\mathcal{M} \to \mathcal{M}^i$, which update synchronously and induce a global map $f:\mathcal{M} \to \mathcal{M}$; and
		\item a set of local observation functions $\{ \psi^1,\psi^2,\ldots,\psi^M \}$ of the form $\psi^i:\mathcal{M}^i\to\mathbb{R}$.
	\end{itemize} 
\end{definition}
The global dynamics and observations can therefore be described by the set of local functions~\cite{cliff16b}:
\begin{gather}
	x^i_{n+1}=f^i(x^i_n, \langle x^{ij}_n \rangle_j) + \upsilon_{f^i}, \label{eq:dynamics} \\
	y^i_{n+1} = \psi^i( x^i_{n+1} ) + \upsilon_{\psi^i}, \label{eq:observation}
\end{gather}
where $\upsilon_{f^i}$ and $\upsilon_{\psi^i}$ are additive noise terms. The subsystem dynamics~\eqref{eq:dynamics} are a function of the subsystem state $x^i_n$ and the subsystem parents' state $\langle x^{ij}_n \rangle_j$ at the previous time index, i.e., $f^i:(\mathcal{M}^{i} \times_j \mathcal{M}^{ij}) \to \mathcal{M}^i$. However, the observation $y_{n+1}^i$ is a function of the subsystem state alone, i.e., $\psi^i:\mathcal{M}^i \to \mathbb{R}$. We assume the maps $\{ f^i \}$ and $\{\psi^i\}$, as well as the graph $G$, are time-invariant.

The discrete-time mapping for the dynamics~\eqref{eq:dynamics} and measurement function~\eqref{eq:observation} can be modelled as a \ac{DBN} in order to facilitate structure learning of the graph~\cite{cliff16b}. \acp{DBN} are a probabilistic graphical model that represent probability distributions over trajectories of random variables $(\boldsymbol{Z}_1,\boldsymbol{Z}_2,\ldots)$ by a prior \ac{BN} and a \emph{two-time-slice \ac{BN} (2TBN)}~\citep{friedman98a}. To model the maps, however, we need only to consider the 2TBN $B = (G,\Theta)$, which models a first-order Markov process $p_{B}( \boldsymbol{z}_{n+1} \mid \boldsymbol{z}_{n} )$ graphically and consists of: a \ac{DAG} $G$ and a set of \ac{CPD} parameters $\Theta$.~\citep{friedman98a}. Given a set of stochastic processes $( \boldsymbol{Z}_1, \boldsymbol{Z}_2, \ldots, \boldsymbol{Z}_N )$, the realisation of which constitutes a dataset $D=(\boldsymbol{z}_1,\boldsymbol{z}_2,\ldots,\boldsymbol{z}_N)$, the 2TBN distribution is given by $p_{B}( \boldsymbol{z}_{n+1} \mid \boldsymbol{z}_{n} ) = \prod_{i} p_{B}( z^i_{n+1} \mid \pi_{G}(Z^i_{n+1}) ),$
where $\pi_\mathcal{G}(Z^i_{n+1})$ denotes the (index-ordered) set of realisations $\{ z^{j}_{o} : Z^j_o \in \Pi_\mathcal{G}(Z^i_{n+1}) \}$.

To model the synchronous \ac{GDS} as a \ac{DBN}, we associate each subsystem vertex $V^i$ with a state variable $X^i_n$ and an observation variable $Y^i_n$; the parents of subsystem $V^i$ are denoted $\Pi_G(V^i)$~\cite{cliff16b}. From the dynamics~\eqref{eq:dynamics}, variables in the set $\Pi_G( X^i_{n+1} )$ come strictly from the preceding time slice, and additionally, from the measurement function~\eqref{eq:observation}, $\Pi_{G}( Y^i_{n+1} ) = X^i_{n+1}$. Thus, we can build the edge set $\mathcal{E}$ in the \ac{GDS} by means of the \ac{DBN}~\cite{cliff16b}, i.e., given an edge $X^i_n \to X^j_{n+1}$ of the \ac{DBN}, the equivalent edge $V^i \to V^j$ exists for the \ac{GDS}. The distributions for the dynamics~\eqref{eq:dynamics} and observation~\eqref{eq:observation} maps of $M$ arbitrary subsystems can therefore be factorised according to the \ac{DBN} structure such that~\cite{cliff16b}
\begin{equation} \label{eq:factored-dbn}
	p_{B}( \boldsymbol{z}_{n+1} \mid \boldsymbol{z}_{n} ) = \prod^M_{i=1} p_D( x^i_{n+1} \mid x^i_n, \langle x^{ij}_{n} \rangle_j ) \cdot p_D( y^i_{n+1} \mid x^i_{n+1} ).
\end{equation}
The goal of learning nonlinear dynamical networks thus becomes that of inferring the parent set $\Pi_G( X^{i}_n )$ for each latent variable $X^i_n$.

\section{Network Scoring Functions} \label{sec:learning-dbns}

A number of exact and approximate \ac{DBN} structure learning algorithms exist that are based on Bayesian statistics and information theory. We have shown in prior work how to compute the log-likelihood function for synchronous \acp{GDS}. In this section, we will review the literature on structure learning for \acp{DBN}, focusing on the factorised distribution in Eq.~\eqref{eq:factored-dbn}. Then, we present our proposed approach to structure learning based on conditional KL divergence.

We focus on the methods for learning the synchronous \ac{GDS} structure using the \emph{score and search} paradigm~\citep{koller09a}, which can be stated as: given a dataset $D = (\boldsymbol{y}_1, \boldsymbol{y}_2, \ldots, \boldsymbol{y}_N)$ of multivariate observations, find a \ac{DAG} $G^*$ such that
\begin{equation} \label{eq:goal}
	G^* = \argmax_{G \in \mathcal{G}} g( B : D ),
\end{equation}
where $g(B:D)$ is a scoring function measuring the degree of fitness of a candidate \ac{DAG} $G$ to the data set $D$, and $\mathcal{G}$ is the set of all \acp{DAG}. Finding the optimal graph $G^*$ in~\eqref{eq:goal} requires solutions to the two subproblems that comprise structure learning: the \emph{evaluation} problem and the \emph{identification} problem~\citep{chickering02a}. The main problem we focus on in this paper is the evaluation problem, i.e., determining a score that quantifies the quality of a graph, given data. Later we will address the identification problem by discussing the attributes of this scoring function in efficiently finding the optimal graph structure.

\subsection{Prior work}

A common approach to developing a score is to consider the posterior probability of the network structure $G$, given data $D$. Using Bayes' rule, we can express this distribution as $p( G \mid D ) \propto p( D \mid G ) p( G )$, where $p( G )$ encodes any prior assumptions we want to make about the network $G$. Thus, the problem becomes that of computing the likelihood of the data, given the model, $p( D \mid G )$. The likelihood can be written in terms of distributions over network parameters~\citep{friedman98a} $p( D \mid G ) = \int p( D \mid G, \Theta ) p( \Theta \mid G ) d\Theta$. Taking this approach, denote $\ell( \hat{\Theta}_G : D ) = \log p( D \mid G, \hat{\Theta}_G )$ as the log-likelihood function for a choice of parameters $\hat{\Theta}_G$ that maximise $p( D \mid G, \Theta )$, given a graph $G$. A number of asymptotically optimal information criterion can then be computed as a function of the log-likelihood $\ell( \hat{\Theta}_G : D )$, the model dimension (number of parameters) $C(G)$, and the dataset size $f(N)$, given by the general form~\cite{cliff16b}
\begin{equation} \label{eq:g-mdl}
	g_{\textsc{IC}}(B:D) = \ell( \hat{\Theta}_G : D ) - f(N) \cdot C(G).
\end{equation}
When $f(N)=1$, we have the \ac{AIC} score~\citep{akaike74a}, $f(N)=\log(N)/2$ is the \ac{BIC} score~\citep{schwarz78a}, and $f(N)=0$ gives the maximum likelihood score.

We have recently shown that state space reconstruction (see Appendix A) can be used to compute the log-likelihood of~\eqref{eq:factored-dbn} as a difference of conditional entropy terms~\citep{cliff16b}:
\begin{equation} \label{eq:log-ll-cond-entropy}
	\ell( \hat{\Theta}_{G} : D ) = N \cdot H( \boldsymbol{X}_n \mid \langle Y^{i,(\kappa^i)}_n \rangle ) - N \cdot \sum_{i=1}^M H( Y^i_{n+1} \mid Y^{i,(\kappa^i)}_{n}, \langle Y^{ij,(\kappa^{ij})}_n \rangle_j ),
\end{equation}
where $H( Z \mid W )$ is the entropy of variable $Z$ conditioned on $W$~\cite{mackay03a},
\begin{equation} \label{eq:cond-entropy}
	H( Z \mid W )-\sum_{ z,w } p( z, w ) \log_2 p( z \mid w ).
\end{equation}
In order to calculate the model complexity $C(G)$ for this information criterion~\eqref{eq:g-mdl}, a parametric model is required for density estimation. Thus we can not rely on non-parametric density estimators and instead must discretise the dataset to some resolution or derive a parametric model from, e.g., the physics of the phenomenon being studied.

\subsection{Proposed approach}

To overcome the issue of parameterising the distributions, in this work we consider the different problem of finding an optimal \ac{DBN} structure as searching for a parsimonious \emph{factorised distribution} that best represents the \emph{joint distribution}. \ifpaper \cite{campos06a} \else De Campos~\cite{campos06a} \fi proposes using the KL divergence as a natural information-theoretic approach to quantifying the similarity of these distributions for a \ac{BN}. We extend this approach to the \ac{DBN} structure learning problem by considering the conditional KL divergence, i.e., we compare the joint and factorised distributions of time slices, given the entire history,\footnote{\ifpaper \cite{vinh11b} \else Vinh et al.~\cite{vinh11b} \fi applied the \textsc{mit} algorithm~\citep{campos06a} to \ac{DBN} structure learning with complete data, however did not derive the results explicitly from conditional KL divergence. We show a full derivation here for the case with latent variables.}
\begin{align} \label{eq:kl-simp}
	 D_{\text{KL}} \left( p_D \parallel p_B \right) &= D_{\text{KL}} \left( p_D ( \boldsymbol{z}_{n+1} \mid \boldsymbol{z}_{n}^{(n)} ) \parallel p_B( \boldsymbol{z}_{n+1} \mid \boldsymbol{z}_{n}^{(n)} ) \right) \nonumber \\
	 &= \sum_{ \boldsymbol{z}_{n+1}, \boldsymbol{z}_{n}^{(n)} } p_{D} ( \boldsymbol{z}_{n+1}, \boldsymbol{z}_{n}^{(n)} ) \log_2 \frac{ p_{D} ( \boldsymbol{z}_{n+1} \mid \boldsymbol{z}_{n}^{(n)} ) }{ p_B ( \boldsymbol{z}_{n+1} \mid \boldsymbol{z}_n ) }.
\end{align}
Although~\eqref{eq:kl-simp} is not yet a scoring function, in Sec.~\ref{sec:scoring-functions} we present a number of scores based on this measure. First, however, we must derive a tractable form of KL divergence. Substituting the synchronous \ac{GDS} model~\eqref{eq:factored-dbn} into~\eqref{eq:kl-simp}, we get
\begin{equation} \label{eq:kl}
D_{\text{KL}} \left( p_D \parallel p_B \right) = \sum_{ \boldsymbol{z}_{n+1}, \boldsymbol{z}_{n}^{(n)} } p_{D} ( \boldsymbol{z}_{n+1}, \boldsymbol{z}_{n}^{(n)} ) \log_2 \frac{ p_{D} ( \boldsymbol{z}_{n+1} \mid \boldsymbol{z}_{n}^{(n)} ) }{ \prod^M_{i=1} p_{D}( x^i_{n+1} \mid x^i_n, \langle x^{ij}_{n} \rangle_j ) \cdot p_{D}( y^i_{n+1} \mid x^i_{n+1} ) }.
\end{equation}
Unfortunately,~\eqref{eq:kl} comprises maximum likelihood distributions with unobserved (latent) state components $\boldsymbol{x}_n$; to compute these distribution, we resort to state space reconstruction.

\section{Computing the conditional KL divergence} \label{sec:takens-embedding-for-gds}

In this section we use state space reconstruction theorems based on Takens' seminal work~\citep{takens81a} to obtain a tractable form of the conditional KL divergence~\eqref{eq:kl}. Following this, we reformulate this expression as a sum of two information-theoretic terms for use in our scoring functions (described later).

\subsection{A tractable expression via state space reconstruction}

In order to compute the distributions in~\eqref{eq:kl}, we use the Bundle Delay Embedding Theorem~\citep{stark97a} to reformulate the factorised distribution (denominator), and the Delay Embedding Theorem for Multivariate Observation Functions~\citep{deyle11a} for the joint distribution (numerator). We describe these theorems in detail in Appendix A, along with the technical assumptions required for $(f,\psi)$. The first step is to reproduce our prior result for computing the factorised distribution (denominator) in Eq.~\eqref{eq:kl}.
\begin{lemma}[\ifpaper \cite{cliff16b}\else Cliff et al.~\cite{cliff16b}\fi] \label{lem:graph-cpd}
	Given an observed dataset $D=(\boldsymbol{y}_1,\boldsymbol{y}_2,\ldots,\boldsymbol{y}_N)$, where $\boldsymbol{y}_n\in\mathbb{R}^M$, generated by a directed and acyclic synchronous \ac{GDS} $(G, \boldsymbol{x}_n, \boldsymbol{y}_n, \{ f^i \}, \{ \psi^i \})$, the 2TBN distribution can be written as
	\begin{equation} \label{eq:cpd2}
		\prod^M_{i=1} p_D( x^i_{n+1} \mid x^i_n, \langle x^{ij}_{n} \rangle_j ) \cdot p_D( y^i_{n+1} \mid x^i_{n+1} ) = \frac{ \prod_{i=1}^M p_D( y^i_{n+1} \mid y^{i,(\kappa^i)}_n,\langle y^{ij,(\kappa^{ij})}_n \rangle_j ) }{ p_D( \boldsymbol{x}_n \mid \langle y^{i,(\kappa^i)}_n \rangle ) }.
	\end{equation}
\end{lemma}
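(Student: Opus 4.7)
The plan is to invoke the state-space reconstruction theorems of Appendix A to replace the latent variables in the factorised 2TBN expression by delay-coordinate observation vectors, collecting the resulting change-of-variables factor into the denominator of the claimed ratio. I would proceed in three steps.

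First, I would apply the Bundle Delay Embedding Theorem subsystem-by-subsystem. Treating subsystem $i$ as the response driven by its parents $\langle x^{ij}_n\rangle_j$, the theorem guarantees (under the genericity assumptions on $f^i$ and $\psi^i$, for sufficiently large embedding lags $\kappa^i$ and $\kappa^{ij}$) that the map $(x^i_n,\langle x^{ij}_n\rangle_j)\mapsto(y^{i,(\kappa^i)}_n,\langle y^{ij,(\kappa^{ij})}_n\rangle_j)$ is a diffeomorphism onto its image. Since conditioning on an invertible function of a set of variables is equivalent to conditioning on the variables themselves, this gives $p_D(y^i_{n+1}\mid x^i_n,\langle x^{ij}_n\rangle_j)=p_D(y^i_{n+1}\mid y^{i,(\kappa^i)}_n,\langle y^{ij,(\kappa^{ij})}_n\rangle_j)$, recovering exactly the numerator factors on the right-hand side of the lemma.

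Second, I would relate these marginal observation conditionals back to the left-hand side via the \ac{DBN} factorisation. Integrating out each latent successor yields $p_D(y^i_{n+1}\mid x^i_n,\langle x^{ij}_n\rangle_j)=\int p_D(x^i_{n+1}\mid x^i_n,\langle x^{ij}_n\rangle_j)\,p_D(y^i_{n+1}\mid x^i_{n+1})\,dx^i_{n+1}$, and because the 2TBN imposes conditional independence of the $\{x^i_{n+1}\}_i$ given $\boldsymbol{x}_n$ together with the local observation structure $y^i_{n+1}\perp y^j_{n+1}\mid\boldsymbol{x}_{n+1}$, the product of these integrals over $i$ equals the joint marginal $p_D(\boldsymbol{y}_{n+1}\mid\boldsymbol{x}_n)$. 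Third, I would invoke the Delay Embedding Theorem for Multivariate Observation Functions to obtain the corresponding global diffeomorphism between $\boldsymbol{x}_n$ and the aggregated delay vector $\langle y^{i,(\kappa^i)}_n\rangle$. This both justifies $p_D(\boldsymbol{y}_{n+1}\mid\boldsymbol{x}_n)=p_D(\boldsymbol{y}_{n+1}\mid\langle y^{i,(\kappa^i)}_n\rangle)$ and gives meaning to $p_D(\boldsymbol{x}_n\mid\langle y^{i,(\kappa^i)}_n\rangle)$ as the Jacobian-like change-of-measure term. Using Bayes' rule,
\[
p_D(\boldsymbol{y}_{n+1}\mid\boldsymbol{x}_n)=\frac{p_D(\boldsymbol{y}_{n+1}\mid\langle y^{i,(\kappa^i)}_n\rangle)\,p_D(\boldsymbol{x}_n\mid\boldsymbol{y}_{n+1},\langle y^{i,(\kappa^i)}_n\rangle)}{p_D(\boldsymbol{x}_n\mid\langle y^{i,(\kappa^i)}_n\rangle)},
\]
and observing that the posterior of $\boldsymbol{x}_n$ collapses onto its prior given the delay vectors (the global embedding renders $\boldsymbol{x}_n$ a deterministic function of them), I arrive at the ratio form of the lemma after substituting the subsystem-wise identity from Step 1.

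The main obstacle I anticipate is the variable mismatch between the two sides of the claim: the left-hand side is a joint density over $(\boldsymbol{x}_{n+1},\boldsymbol{y}_{n+1})$, whereas the right-hand side contains no $\boldsymbol{x}_{n+1}$. Making the equality rigorous requires interpreting it either as a density on the observable coordinates---relying on the (near-)deterministic dynamics that collapse $\boldsymbol{x}_{n+1}$ to a delta function given $\boldsymbol{x}_n$---or as the identity obtained after integrating $\boldsymbol{x}_{n+1}$ out through the \ac{DBN} factorisation. A secondary care-point is verifying that the embedding lags $\kappa^i$ and $\kappa^{ij}$ can be chosen compatibly so that both the subsystem-wise Bundle theorem and the global Multivariate theorem apply simultaneously on the support of the invariant measure of $f$.
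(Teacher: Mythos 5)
First, note that the paper does not actually prove Lemma~\ref{lem:graph-cpd} here --- it is imported from prior work --- but the template for the intended argument is visible in the proof of the companion result, Lemma~\ref{lem:data-cpd}, and your proposal diverges from it in a way that creates a genuine gap. The crux is where the denominator $p_D( \boldsymbol{x}_n \mid \langle y^{i,(\kappa^i)}_n \rangle )$ comes from. In the paper's treatment it is the \emph{reconstruction probability}: because the observations are noisy, the delay vector determines the state only probabilistically, and the probability of predicting the next observation from delay coordinates factorises as a chain of independent noise stages, reconstruction $\times$ transition $\times$ observation (cf.\ Eqs.~\eqref{eq:px-from-y}--\eqref{eq:final}); rearranging that chain is what places $p_D(\boldsymbol{x}_n\mid\langle y^{i,(\kappa^i)}_n\rangle)$ in the denominator. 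Your Step~1 instead asserts $p_D(y^i_{n+1}\mid x^i_n,\langle x^{ij}_n\rangle_j)=p_D(y^i_{n+1}\mid y^{i,(\kappa^i)}_n,\langle y^{ij,(\kappa^{ij})}_n\rangle_j)$ on the grounds that conditioning on an invertible function is conditioning on the variables. That is exactly the idealisation that throws the reconstruction uncertainty away: the delay vector is not a deterministic function of the current states (it contains observation noise and past states), and discarding this discrepancy leaves you with nothing to put in the denominator.

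Your Step~3 then tries to reintroduce the denominator via Bayes' rule, but the manipulation is self-defeating: if, as you assert, the posterior $p_D(\boldsymbol{x}_n\mid\boldsymbol{y}_{n+1},\langle y^{i,(\kappa^i)}_n\rangle)$ collapses onto the prior $p_D(\boldsymbol{x}_n\mid\langle y^{i,(\kappa^i)}_n\rangle)$, the two terms cancel and you obtain $p_D(\boldsymbol{y}_{n+1}\mid\boldsymbol{x}_n)=p_D(\boldsymbol{y}_{n+1}\mid\langle y^{i,(\kappa^i)}_n\rangle)$ with no residual denominator --- not the claimed ratio. Two further points: (i) the variable mismatch you flag is resolved in the paper not by integrating out $\boldsymbol{x}_{n+1}$ (which would replace $p_D(x^i_{n+1}\mid\cdot)\,p_D(y^i_{n+1}\mid x^i_{n+1})$ by a different object) but by evaluating every density at the reconstructed points $\boldsymbol{x}_n=\boldsymbol{\Phi}_{f,\psi}^{-1}(\langle y^{i,(\kappa^i)}_n\rangle)$ and $\boldsymbol{x}_{n+1}=f(\boldsymbol{x}_n)$, with the i.i.d.\ noise assumptions carrying the identities; (ii) even granting your Step~1 per subsystem, you never reconcile the product over $i$ of per-subsystem reconstruction terms (in which each parent state appears once for every child) with the single joint term $p_D(\boldsymbol{x}_n\mid\langle y^{i,(\kappa^i)}_n\rangle)$ appearing in \eqref{eq:cpd2}. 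To repair the argument, mirror the proof of Lemma~\ref{lem:data-cpd}: apply the Bundle Delay Embedding Theorem to each forced subsystem to obtain a local predictor of $y^i_{n+1}$ from $(y^{i,(\kappa^i)}_n,\langle y^{ij,(\kappa^{ij})}_n\rangle_j)$, write each numerator factor as reconstruction $\times$ transition $\times$ observation probabilities, and divide through by the joint reconstruction term.
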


Next, we present a method for computing the joint distribution (numerator) in Lemma~\ref{lem:data-cpd}. For convenience, Lemma~\ref{lem:multivariate-takens} restates part of the delay embedding theorem in~\cite{deyle11a} in terms of subsystems of a synchronous \ac{GDS} and establishes existence of a map $\mathbf{G}$ for predicting future observations from a history of observations.
\begin{lemma} \label{lem:multivariate-takens}
	Consider a diffeomorphism $f:\mathcal{M}\to\mathcal{M}$ on a $d$-dimensional manifold $\mathcal{M}$, where the multivariate state $\boldsymbol{x}_n$ consists of $M$ subsystem states $\langle x_n^1,x_n^2,\ldots,x_n^M \rangle$. Each subsystem state $x^i_n$ is confined to a submanifold $\mathcal{M}^i\subseteq \mathcal{M}$ of dimension $d^i \leq d$, where $\sum_i d^i=d$. The multivariate observation can be estimated, for some map $\mathbf{G}$, by $\boldsymbol{y}_{n+1} = \mathbf{G}( \langle y^{i,(\kappa^i)}_{n} \rangle )$.
\end{lemma}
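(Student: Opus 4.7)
The plan is to reduce the claim directly to the multivariate delay embedding theorem of Deyle and Sugihara, since the statement is essentially the ``predictability'' corollary of such an embedding result. The three ingredients we have at hand are: (i) a diffeomorphism $f:\mathcal{M}\to\mathcal{M}$ on a compact $d$-dimensional manifold, (ii) a collection of scalar observation functions $\{\psi^i\}$, each depending only on the subsystem coordinate $x^i \in \mathcal{M}^i$, and (iii) delay windows of lengths $\kappa^i$ so that the concatenated delay vector $\boldsymbol{\Phi}_n := \langle y^{i,(\kappa^i)}_n \rangle$ lives in $\mathbb{R}^{K}$ with $K = \sum_i \kappa^i$. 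We want to produce a well-defined map $\mathbf{G}$ sending $\boldsymbol{\Phi}_n$ to $\boldsymbol{y}_{n+1}$.

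\paragraph{Key steps.} First, I would assemble the concatenated delay map
\begin{equation*}
  \boldsymbol{\Phi}(\boldsymbol{x}) \;=\; \bigl\langle \psi^i(x^i), \psi^i\!\circ f^{-1}_i(x^i), \ldots, \psi^i\!\circ f^{-(\kappa^i-1)}_i(x^i) \bigr\rangle_{i=1}^{M},
\end{equation*}
where each coordinate is obtained by composing the global dynamics $f$ with the scalar readout $\psi^i$ along its own subsystem. Second, I would apply the multivariate delay embedding theorem (Appendix A, \cite{deyle11a}) to conclude that, provided the pair $(f, \{\psi^i\})$ is generic and the total delay dimension $K$ exceeds $2d$, the map $\boldsymbol{\Phi}:\mathcal{M}\to\mathbb{R}^{K}$ is an embedding of $\mathcal{M}$ onto its image $\boldsymbol{\Phi}(\mathcal{M})\subset\mathbb{R}^{K}$. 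Because an embedding is a diffeomorphism onto its image, there is a smooth left-inverse $\boldsymbol{\Phi}^{-1}:\boldsymbol{\Phi}(\mathcal{M})\to\mathcal{M}$. Third, I would simply compose
\begin{equation*}
  \mathbf{G} \;:=\; \boldsymbol{\psi} \circ f \circ \boldsymbol{\Phi}^{-1},
  \qquad \boldsymbol{\psi}(\boldsymbol{x}) = \langle \psi^i(x^i)\rangle_{i=1}^M,
\end{equation*}
and verify that $\mathbf{G}(\boldsymbol{\Phi}(\boldsymbol{x}_n)) = \boldsymbol{\psi}(f(\boldsymbol{x}_n)) = \boldsymbol{\psi}(\boldsymbol{x}_{n+1}) = \boldsymbol{y}_{n+1}$, which is exactly the claim once we identify $\boldsymbol{\Phi}(\boldsymbol{x}_n)$ with the delay vector $\langle y^{i,(\kappa^i)}_n\rangle$.

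\paragraph{Main obstacle.} The routine differential-topology part (existence of a smooth inverse on the image, composition of diffeomorphisms with $f$ and the $\psi^i$) is immediate once the embedding is in hand. The only substantive issue, and the one I expect to spend most care on, is verifying that the hypotheses of the multivariate delay embedding theorem are satisfied in our setting. In particular, the theorem requires: (a) generic choice of each scalar observation $\psi^i$, which is the standard prevalence-of-embeddings condition; (b) that the submanifold structure $\mathcal{M} = \times_i \mathcal{M}^i$ with $\sum_i d^i = d$ be compatible with the factorised observation scheme, so that the $\kappa^i$ may be chosen individually rather than as a single common window length; and (c) that the aggregate number of delays satisfies the dimension inequality ($K>2d$ suffices, and sharper bounds relating the $\kappa^i$ to the $d^i$ are available from~\cite{deyle11a}). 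Once these hypotheses are explicitly stated and shown to be inherited from the standing assumptions on $(f,\{\psi^i\})$ laid out in Appendix A, the construction of $\mathbf{G}$ follows mechanically.
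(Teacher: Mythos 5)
Your proposal is correct and follows essentially the same route as the paper: assemble the concatenated multivariate delay map, invoke the Deyle--Sugihara embedding theorem with $\sum_i \kappa^i = 2d+1$, and define $\mathbf{G}$ as the first $M$ components of the conjugated dynamics $\boldsymbol{\Phi}_{f,\psi} \circ f \circ \boldsymbol{\Phi}_{f,\psi}^{-1}$ (your $\boldsymbol{\psi}\circ f\circ\boldsymbol{\Phi}^{-1}$ is exactly that projection). One cosmetic caveat: writing the delay coordinates as $\psi^i\circ f_i^{-j}(x^i)$ presumes an autonomous local inverse on $\mathcal{M}^i$, which the coupling precludes; they should be written as $\psi^i(\boldsymbol{x}_{n-j\tau})$, i.e., functions of the global backward orbit, as the paper does---but this does not affect the argument.
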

\begin{proof}
	We can reformulate the proof of Deyle et al.~\citep{deyle11a} in terms of subsystems. Given $M$ inhomogeneous observation functions $\langle \psi^1, \psi^2, \ldots, \psi^M \rangle$, the following map
	\begin{equation} \label{eq:phis}
		\boldsymbol{\Phi}_{f,\psi}(\boldsymbol{x}) = \langle \boldsymbol{\Phi}_{f^1,\psi^1}(\boldsymbol{x}),\boldsymbol{\Phi}_{f^2,\psi^2}(\boldsymbol{x}), \ldots,\boldsymbol{\Phi}_{f^M,\psi^M}(\boldsymbol{x}) \rangle
	\end{equation}
	is an embedding where each subsystem (local) map $\boldsymbol{\Phi}_{f^i,\psi^i}:\mathcal{M}\to\mathbb{R}^{\kappa^i}$, smoothly (at least $\mathbb{C}^2$), and, at time index $n$ is described by
	\begin{align} \label{eq:lags}
		\boldsymbol{\Phi}_{f^i,\psi^i}(\boldsymbol{x}_n) = y^{i,(\kappa^i)}_n &= \langle \psi^i \left( \boldsymbol{x}_n \right), \psi^{i} ( \boldsymbol{x}_{n-\tau} ), \psi^{i} ( \boldsymbol{x}_{n-2\tau} ), \ldots, \psi^i ( \boldsymbol{x}_{n-(k-1)\tau} ) \rangle \nonumber \\
											   			  &= \langle y^i_n, y^i_{n-\tau^i}, y^i_{n-2\tau^i}, \ldots,y^i_{n-(\kappa^i-1)\tau^i} \rangle,
	\end{align}
	where $\tau^i$ is the lag, $\kappa^i$ is the embedding dimension of the $i$th subsystem, and $\sum_i \kappa^i=2d+1$~\citep{deyle11a}.\footnote{The original proof uses positive lags for notational simplicity, however the authors note that the use of negative lags also applies, and will be used in our derivation to account for endomorphisms.} Note that, from~\eqref{eq:phis} and~\eqref{eq:lags}, we have the global map
	\begin{equation} \nonumber
		\boldsymbol{\Phi}_{f,\psi}(\boldsymbol{x}_n) = \langle y^{i,(\kappa^i)}_n \rangle = \langle y^{1,(\kappa^1)}_n, y^{2,(\kappa^2)}_n, \ldots, y^{m,(\kappa^M)}_n \rangle.
	\end{equation}
	Now, since $\boldsymbol{\Phi}_{f,\psi}$ is an embedding, it follows that the map $\mathbf{F} = \boldsymbol{\Phi}_{f,\psi} \circ f \circ \boldsymbol{\Phi}_{f,\psi}^{-1}$ is well defined and a diffeomorphism between two observation sequences $\mathbf{F} : \mathbb{R}^{2d+1} \to \mathbb{R}^{2d+1}$, i.e.,
	\begin{align}
		\langle y^{i,(\kappa^i)}_{n+1} \rangle &= \boldsymbol{\Phi}_{f,\psi} \left( \boldsymbol{x}_{n+1} \right) = \boldsymbol{\Phi}_{f,\psi} \left( f \left( \boldsymbol{x}_n \right) \right) \nonumber \\
										  &= \boldsymbol{\Phi}_{f,\psi} \left( f \left( \boldsymbol{\Phi}^{-1}_{f,\psi}\left( \langle y^{i,(\kappa^i)}_{n} \rangle \right) \right) \right) = \mathbf{F}( \langle y^{i,(\kappa^i)}_{n} \rangle ). \nonumber
	\end{align}
	The last $2d + 1$ components of $\mathbf{F}$ are trivial, i.e., the set $\langle y^{i,(\kappa^i)}_{n} \rangle$ is observed \emph{a priori}; denote the first $M$ components by $\mathbf{G}:\boldsymbol{\Phi}_{f,\psi} \to \mathbb{R}^M$, then we have that $\boldsymbol{y}_{n+1} = \mathbf{G}( \langle y^{i,(\kappa^i)}_{n} \rangle )$. 
\end{proof}

We now use the result of Lemma~\ref{lem:multivariate-takens} to obtain a computable form of the joint distribution.
\begin{lemma} \label{lem:data-cpd}
	Given an observed dataset $D=(\boldsymbol{y}_1,\boldsymbol{y}_2,\ldots,\boldsymbol{y}_N)$, where $\boldsymbol{y}_n\in\mathbb{R}^M$, generated by a discrete-time multivariate dynamical system with generic $(f,\psi)$, the joint distribution can be written as
	\begin{equation} \label{eq:cpd}
		p_D( \boldsymbol{z}_{n+1} \mid \boldsymbol{z}_{n}^{(n)} ) = \frac{p_{D} ( \boldsymbol{y}_{n+1} \mid \langle y^{i, (\kappa^i)}_n \rangle )}{p_{D} ( \boldsymbol{x}_{n} \mid \langle y^{i,(\kappa^i)}_{n} \rangle )}.
	\end{equation}
\end{lemma}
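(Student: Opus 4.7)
My plan is to derive the expression by chaining together the first-order Markov structure of the synchronous GDS with the reconstruction statement of Lemma~\ref{lem:multivariate-takens}, in much the same spirit as the companion Lemma~\ref{lem:graph-cpd}. The idea is to first collapse the long conditioning history to the present hidden state, then use Bayes' rule to surface the denominator, and finally absorb the remaining latent components into the embedding.

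First I would reduce the length of the conditioning set. The dynamics~\eqref{eq:dynamics} and observation map~\eqref{eq:observation} make $(\boldsymbol{x}_n)$ a first-order Markov chain with $\boldsymbol{y}_n$ a function of $\boldsymbol{x}_n$ alone, so $\boldsymbol{z}_{n+1}$ depends on the entire history $\boldsymbol{z}_n^{(n)}$ only through $\boldsymbol{x}_n$. By Lemma~\ref{lem:multivariate-takens} the embedding $\boldsymbol{\Phi}_{f,\psi}$ is a diffeomorphism, so $\boldsymbol{x}_n$ and the delay-coordinate window $\langle y^{i,(\kappa^i)}_n\rangle$ are in bijective correspondence and can be swapped freely inside any conditional, giving
\[
p_D(\boldsymbol{z}_{n+1}\mid \boldsymbol{z}_n^{(n)}) \;=\; p_D(\boldsymbol{z}_{n+1}\mid \boldsymbol{x}_n,\langle y^{i,(\kappa^i)}_n\rangle).
\]

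Next I would apply Bayes' rule with $\langle y^{i,(\kappa^i)}_n\rangle$ as the persistent conditioning variable in order to peel $\boldsymbol{x}_n$ out of the conditional,
\[
p_D(\boldsymbol{z}_{n+1}\mid \boldsymbol{x}_n,\langle y^{i,(\kappa^i)}_n\rangle) \;=\; \frac{p_D(\boldsymbol{z}_{n+1},\boldsymbol{x}_n\mid \langle y^{i,(\kappa^i)}_n\rangle)}{p_D(\boldsymbol{x}_n\mid \langle y^{i,(\kappa^i)}_n\rangle)},
\]
which already produces the denominator of the claim. I would then show the numerator collapses to $p_D(\boldsymbol{y}_{n+1}\mid \langle y^{i,(\kappa^i)}_n\rangle)$ by applying Lemma~\ref{lem:multivariate-takens} twice: at time $n$, so that $\boldsymbol{x}_n = \boldsymbol{\Phi}_{f,\psi}^{-1}(\langle y^{i,(\kappa^i)}_n\rangle)$ is fixed by the conditioning; and at time $n+1$, applied to the shifted window $\langle y^{i,(\kappa^i)}_{n+1}\rangle = \bigl(\boldsymbol{y}_{n+1},\langle y^{i,(\kappa^i-1)}_n\rangle\bigr)$, so that $\boldsymbol{x}_{n+1}$ is determined by $\boldsymbol{y}_{n+1}$ together with the original window. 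Both latent components are thereby absorbed into the embedding, leaving only the scalar observation factor on top.

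The main obstacle will be the measure-theoretic bookkeeping in that final step. Because $\boldsymbol{x}_n$ and $\boldsymbol{x}_{n+1}$ are each concentrated on a single point of $\mathcal{M}$ given the window (and $\boldsymbol{y}_{n+1}$), the numerator joint is a singular measure, so the two latent ``cancellations'' must be interpreted through the change of variables induced by $\boldsymbol{\Phi}_{f,\psi}$ rather than as pointwise arithmetic on densities. Stationarity of $(\boldsymbol{X}_n,\boldsymbol{Y}_n)$ is the ingredient that ensures the embedding's Jacobian contribution at index $n+1$ agrees with its counterpart at index $n$, so no residual factor survives once the latents are integrated out. I would adopt the same distributional convention used in Lemma~\ref{lem:graph-cpd} to keep the argument succinct rather than redo the change-of-variables machinery from scratch.
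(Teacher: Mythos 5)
Your proposal is correct and follows essentially the same route as the paper's proof: both arguments rest on the first-order Markov reduction of the history to $\boldsymbol{x}_n$, the bijection between the latent state and the delay window supplied by Lemma~\ref{lem:multivariate-takens}, and the i.i.d.-noise convention that keeps $p_D(\boldsymbol{x}_n \mid \langle y^{i,(\kappa^i)}_n\rangle)$ as a symbolic concentrated factor rather than collapsing it to unity. The only difference is cosmetic ordering --- you surface the denominator in one Bayes'-rule step and then collapse the numerator, whereas the paper first applies the chain rule to $(\boldsymbol{x}_{n+1},\boldsymbol{y}_{n+1})$ and telescopes two window-conditioned ratios --- and your closing remark about adopting Lemma~\ref{lem:graph-cpd}'s distributional convention is exactly the bookkeeping the paper relies on.
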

\begin{proof}
Firstly, by the chain rule
\begin{equation} \label{eq:chain}
	p_{D} ( \boldsymbol{z}_{n+1} \mid \boldsymbol{z}_{n}^{(n)} ) = p_{D} ( \boldsymbol{x}_{n+1} \mid \boldsymbol{z}_{n}^{(n)} ) \cdot p_{D} ( \boldsymbol{y}_{n+1} \mid \boldsymbol{x}_{n+1}, \boldsymbol{z}_{n}^{(n)} )
\end{equation}
Assuming we had realisations of $(\boldsymbol{x}_n,\boldsymbol{x}_{n+1})$, the probability distribution of~\eqref{eq:chain} would then be given by the product
\begin{equation} \label{eq:min-noise}
	p_{D} ( \boldsymbol{z}_{n+1} \mid \boldsymbol{z}_{n}^{(n)} ) = p_D( \boldsymbol{X}_{n+1} = f( \boldsymbol{x}_n ) \mid \boldsymbol{x}_n ) \cdot p_D( \boldsymbol{Y}_{n+1} = \psi(\boldsymbol{x}_{n+1}) \mid \boldsymbol{x}_{n+1} ).
\end{equation}
From Lemma~\ref{lem:multivariate-takens}, we have the set of equations
\begin{gather}
	\boldsymbol{x}_{n+1} = f( \boldsymbol{x}_{n} ) + \boldsymbol{\upsilon}_{f} = f \left( \boldsymbol{\Phi}_{f,\psi}^{-1} \left( \langle y^{i,(\kappa^i)}_n \rangle \right) \right) + \boldsymbol{\upsilon}_{f}, \label{eq:x-from-y} \\
	\boldsymbol{y}_{n+1} = \psi( \boldsymbol{x}_{n+1} ) + \boldsymbol{\upsilon}_{\psi} = \mathbf{G}( \langle y^{i,(\kappa^i)}_n \rangle ) + \boldsymbol{\upsilon}_{\psi}. \label{eq:y-from-y}
\end{gather}
Given the assumption of i.i.d noise on the function $f$, from~\eqref{eq:x-from-y}, we express the probability of observing $\boldsymbol{x}_{n+1}$ given by the embedding as
\begin{align} \label{eq:px-from-y}
			p_D( \boldsymbol{x}_{n+1} \mid \langle y^{i,(\kappa^i)}_n \rangle ) &= p_D( \boldsymbol{X}_{n+1} = f\left( \boldsymbol{\Phi}_{f,\psi}^{-1} \left( \langle y^{i,(\kappa^i)}_n \rangle \right) \right) \mid \langle y^{i,(\kappa^i)}_n \rangle ) \nonumber \\
			&= p_D\left( \boldsymbol{X}_n = \boldsymbol{\Phi}_{f,\psi}^{-1} \left( \langle y^{i,(\kappa^i)}_n \rangle \right) \mid \langle y^{i,(\kappa^i)}_n \rangle \right) \cdot  p_D \left( \boldsymbol{X}_{n+1} = f( \boldsymbol{x}_n ) \mid \boldsymbol{x}_n \right),
\end{align}
From our assumption that the observation noise is i.i.d or dependent only on the state $\boldsymbol{x}_{n+1}$, the probability of observing $\boldsymbol{y}_{n+1}$, from~\eqref{eq:y-from-y} is
\begin{align} \label{eq:py-from-y}
	p_D( \boldsymbol{y}_{n+1} \mid \langle y^{i,(\kappa^i)}_{n} \rangle ) &= p_D( \boldsymbol{Y}_{n+1} = \mathbf{G}( \langle y^{i,(\kappa^i)}_{n} \rangle ) \mid \langle y^{i,(\kappa^i)}_{n} \rangle ) \nonumber \\
	&= p_D( \boldsymbol{X}_{n+1} = f\left( \boldsymbol{\Phi}_{f,\psi}^{-1} \left( \langle y^{i,(\kappa^i)}_n \rangle \right) \right) \mid \langle y^{i,(\kappa^i)}_n \rangle ) \cdot p_D\left( \boldsymbol{Y}_{n+1} = \psi( \boldsymbol{x}_{n+1} ) \mid \boldsymbol{x}_{n+1} \right).
\end{align}
Substituting Eq.~\eqref{eq:px-from-y} into~\eqref{eq:py-from-y}, we have that
\begin{equation} \label{eq:final}
	p_D( \boldsymbol{x}_{n+1} \mid \boldsymbol{x}_n ) \cdot p_D( \boldsymbol{y}_{n+1} \mid \boldsymbol{x}_{n+1} ) = \frac{ p_D( \boldsymbol{y}_{n+1} \mid \langle y^{i,(\kappa^i)}_n \rangle ) }{ p_D( \boldsymbol{x}_n \mid \langle y^{i,(\kappa^i)}_n \rangle ) }
\end{equation}
Finally, substituting Eq.~\eqref{eq:final} into Eq.~\eqref{eq:min-noise} gives Eq.~\eqref{eq:cpd}.
\end{proof}
Using Lemma~\ref{lem:graph-cpd}, we can substitute~\eqref{eq:cpd2} into~\eqref{eq:kl}
\begin{equation} \label{eq:int}
 D_{\text{KL}}( p_D \parallel p_B ) = \sum_{ \boldsymbol{z}_{n+1}, \boldsymbol{z}_{n}^{(n)} } p_{D} ( \boldsymbol{z}_{n+1}, \boldsymbol{z}_{n}^{(n)} ) \log_2 \frac{ p_{D}( \boldsymbol{z}_{n+1} \mid \boldsymbol{z}_{n}^{(n)} ) \cdot p_D( \boldsymbol{x}_n \mid \langle y^{i,(\kappa^i)}_n \rangle ) }{ \prod_{i=1}^M p_D( y^i_{n+1} \mid y^{i,(\kappa^i)}_n,\langle y^{ij,(\kappa^{ij})}_n \rangle_j ) }.
\end{equation}
Then, from Lemma~\ref{lem:data-cpd}, we can substitute~\eqref{eq:cpd} into~\eqref{eq:int}, giving
\begin{align} \label{eq:kl-div}
 D_{KL}( p_D \parallel p_B ) &= \sum_{ \boldsymbol{y}_{n+1}, \langle y^{i,(\kappa^i)}_{n} \rangle } p_{D} ( \boldsymbol{y}_{n+1}, \langle y^{i,(\kappa^i)}_{n} \rangle ) \log_2 \frac{p_{D} ( \boldsymbol{y}_{n+1} \mid \langle y^{i,(\kappa^i)}_{n} \rangle ) }{ \prod_{i=1}^M p_D( y^i_{n+1} \mid y^{i,(\kappa^i)}_n,\langle y^{ij,(\kappa^{ij})}_n \rangle_j ) } \nonumber \\
\end{align}
Given all variables in~\eqref{eq:kl-div} are observed, it is straightforward to compute KL divergence; however, as we will see, it is more convenient to express~\eqref{eq:kl-div} as a function of information-theoretic measures.

\subsection{Information-theoretic interpretation}

Before presenting the main theorem of the paper, we first introduce the concepts of collective transfer entropy and stochastic interaction. Transfer entropy detects the directed exchange of information between random processes by marginalising out common history and static correlations between variables; it is thus considered a measure of information transfer within a system~\citep{schreiber00a}. The collective transfer entropy computes the information transfer between a set of $M$ source processes and a single destination process~\citep{lizier10b}. Consider the set $\boldsymbol{Y}= \{ Y^i \}$ of source processes. We can compute the collective transfer entropy from $\boldsymbol{Y}$ to the destination process $X$ as a function of conditional entropy~\eqref{eq:cond-entropy} terms
\begin{align} \label{eq:te}
	T_{\boldsymbol{Y} \to X} &= H \left( X_{n+1} \mid X^{(\kappa^i)}_{n} \right) - H\left( X_{n+1} \mid X^{(\kappa^i)}_{n}, \langle Y^{i,(\kappa^{i})}_{n} \rangle \right) 
\end{align}
Stochastic interaction measures the complexity of dynamical systems by quantifying the excess of information processed, in time, by the system beyond the information processed by each of the nodes~\citep{ay03a,ay03b,edlund11a}. Using the same notation, stochastic interaction of the collection of processes $\boldsymbol{Y}$ is
\begin{equation} \label{eq:stochastic-interaction}
	S_{\boldsymbol{Y}} = - H \left( \boldsymbol{Y}_{n+1} \mid \langle Y_{n}^{i,(\kappa^i)} \rangle \right) + \sum_{i=1}^M H \left( Y^i_{n+1} \mid Y^{i,(\kappa^i)}_{n} \right).
\end{equation}
Note that the original definition assumed a first-order Markov process~\citep{ay03a}, and here we have extended stochastic interaction to arbitrary $\kappa$-order Markov chains. Given these definitions, we have the following result.

\begin{theorem} \label{theorem:divergence}
Consider a discrete-time multivariate dynamical system with generic $(f,\psi)$ represented as a directed and acyclic synchronous \ac{GDS}  $( G, \boldsymbol{x}_n, \boldsymbol{y}_n, \{ f^{i} \}, \{ \psi^i \})$ with $M$ subsystems. The KL divergence $D_{\text{KL}}( p_D \parallel p_B )$ of a candidate graph $G$ from the observed dataset $D = (\boldsymbol{y}_1, \boldsymbol{y}_2, \ldots, \boldsymbol{y}_N )$ is given by the difference between stochastic interaction and collective transfer entropy, i.e.,
\begin{equation} \label{eq:kl-te}
 	D_{\text{KL}}( p_D \parallel p_B ) = S_{\boldsymbol{Y}} - \sum_{i=1}^{m} T_{ \{ Y^{ij} \}_j \to Y^i}.
\end{equation}
\end{theorem}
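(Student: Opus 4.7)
The plan is to start from the computable form of the KL divergence in Eq.~\eqref{eq:kl-div}, which has already been obtained via the two reconstruction lemmas, and reduce it algebraically to a sum of conditional entropy terms that can be repackaged into stochastic interaction and collective transfer entropy. Nothing further from differential topology is needed; the remaining argument is purely information-theoretic bookkeeping.

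First I would split the logarithm in Eq.~\eqref{eq:kl-div} into a difference, giving a ``numerator'' contribution
\[
\sum_{\boldsymbol{y}_{n+1},\langle y^{i,(\kappa^i)}_n\rangle} p_D(\boldsymbol{y}_{n+1},\langle y^{i,(\kappa^i)}_n\rangle)\,\log_2 p_D(\boldsymbol{y}_{n+1}\mid \langle y^{i,(\kappa^i)}_n\rangle)
\]
and a sum over $i$ of ``denominator'' contributions, one for each factor $p_D(y^i_{n+1}\mid y^{i,(\kappa^i)}_n,\langle y^{ij,(\kappa^{ij})}_n\rangle_j)$. By the definition of conditional entropy in Eq.~\eqref{eq:cond-entropy}, the numerator piece is exactly $-H(\boldsymbol{Y}_{n+1}\mid \langle Y^{i,(\kappa^i)}_n\rangle)$. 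For each denominator piece, I would marginalise the joint $p_D(\boldsymbol{y}_{n+1},\langle y^{i,(\kappa^i)}_n\rangle)$ down to the variables that actually appear inside the $i$th logarithm, namely $(y^i_{n+1},y^{i,(\kappa^i)}_n,\langle y^{ij,(\kappa^{ij})}_n\rangle_j)$; this marginalisation is legitimate because the summand depends on nothing else. Each such piece then collapses to $+H(Y^i_{n+1}\mid Y^{i,(\kappa^i)}_n,\langle Y^{ij,(\kappa^{ij})}_n\rangle_j)$, so after this step
\[
D_{\text{KL}}(p_D\parallel p_B) = -H(\boldsymbol{Y}_{n+1}\mid \langle Y^{i,(\kappa^i)}_n\rangle) + \sum_{i=1}^{M} H(Y^i_{n+1}\mid Y^{i,(\kappa^i)}_n,\langle Y^{ij,(\kappa^{ij})}_n\rangle_j).
\]

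Finally I would match this to the target right-hand side by the standard trick of adding and subtracting $\sum_i H(Y^i_{n+1}\mid Y^{i,(\kappa^i)}_n)$. The positive occurrence combined with $-H(\boldsymbol{Y}_{n+1}\mid \langle Y^{i,(\kappa^i)}_n\rangle)$ reproduces the definition of stochastic interaction $S_{\boldsymbol{Y}}$ in Eq.~\eqref{eq:stochastic-interaction}, while the negative occurrence combined with the conditional entropy on $(Y^{i,(\kappa^i)}_n,\langle Y^{ij,(\kappa^{ij})}_n\rangle_j)$ reproduces, term by term, minus the collective transfer entropy $T_{\{Y^{ij}\}_j\to Y^i}$ of Eq.~\eqref{eq:te}. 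Summing over $i$ yields Eq.~\eqref{eq:kl-te}.

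The only place requiring care is the marginalisation step: one has to make sure that for each $i$ the parent set $\langle Y^{ij,(\kappa^{ij})}_n\rangle_j$ is a subset of the conditioning block $\langle Y^{i,(\kappa^i)}_n\rangle$ indexed over all subsystems, so that the marginal $p_D(y^i_{n+1},y^{i,(\kappa^i)}_n,\langle y^{ij,(\kappa^{ij})}_n\rangle_j)$ is a genuine marginal of the joint appearing in Eq.~\eqref{eq:kl-div}. This follows from the definition of the synchronous \ac{GDS} (each parent index $ij$ refers to another subsystem in the network), so the manipulation is justified. Beyond this, the derivation is a direct rewrite and should not require any additional assumptions on $(f,\psi)$ beyond those already used in Lemmas~\ref{lem:graph-cpd} and~\ref{lem:data-cpd}.
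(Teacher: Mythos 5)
Your proposal is correct and follows essentially the same route as the paper's own proof: split the logarithm in Eq.~\eqref{eq:kl-div}, identify the numerator with $-H(\boldsymbol{Y}_{n+1}\mid\langle Y^{i,(\kappa^i)}_n\rangle)$ and each denominator factor with a conditional entropy, then add and subtract $\sum_i H(Y^i_{n+1}\mid Y^{i,(\kappa^i)}_n)$ to assemble $S_{\boldsymbol{Y}}$ and the collective transfer entropies. The only cosmetic difference is that the paper first separates subsystems with empty and non-empty parent sets and then reunifies the sum via $T_{\emptyset\to Y^i}=0$, whereas you handle all $i$ uniformly from the start, which amounts to the same observation.
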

\begin{proof}
	We can reformulate~\eqref{eq:kl-div} as
	\begin{align} \label{eq:kl-div-split}
	 D_{\text{KL}}( p_D \parallel p_B ) &= \sum_{ \boldsymbol{y}_{n+1}, \langle y^{i,(\kappa^i)}_{n} \rangle } p_{D} ( \boldsymbol{y}_{n+1}, \langle y^{i,(\kappa^i)}_{n} \rangle ) \log_2 p_{D} ( \boldsymbol{y}_{n+1} \mid \langle y^{i,(\kappa^i)}_{n} \rangle ) \nonumber \\
		 &\hspace{10mm} - \sum_{ \boldsymbol{y}_{n+1}, \langle y^{i,(\kappa^i)}_{n} \rangle } p_{D} ( \boldsymbol{y}_{n+1}, \langle y^{i,(\kappa^i)}_{n} \rangle )\log_2 \prod_{i=1}^M p_{D} ( y^i_{n+1} \mid y^{i,(\kappa^i)}_{n}, \langle y^{ij,(\kappa^{ij})}_{n} \rangle_j ).
	\end{align}
	Splitting the latter term in~\eqref{eq:kl-div-split} into subsystems without a parent set $\Pi_G(V^i)=\emptyset$ and subsystems with a parent set $\Pi_G(V^i)\neq\emptyset$, we get a function of conditional entropy~\eqref{eq:cond-entropy} terms
	\begin{align} \label{eq:kl-div-ce}
	   D_{\text{KL}}( p_D \parallel p_B ) =&  - H ( \boldsymbol{Y}_{n+1} \mid \langle Y^{(\kappa^i)}_{n} \rangle ) \nonumber \\
					 &\hspace{10mm} + \sum_{\tiny{\substack{i=1,\\\Pi_{G}(V^i) = \emptyset}}}^{M} H( Y^i_{n+1} \mid Y^{i,(\kappa^i)}_{n} ) + \sum_{\tiny{\substack{i=1,\\\Pi_{G}(V^i) \neq \emptyset}}}^{M} H( Y^i_{n+1} \mid Y^{i,(\kappa^i)}_{n}, \langle Y^{ij,(\kappa^{ij})}_{n} \rangle_j )
	\end{align}
	 Then, by adding $\sum_{\tiny{\substack{i=1,\Pi_{G}(V^i) \neq \emptyset}}}^{M} H( Y^i_{n+1} \mid Y^{i,(\kappa^i)}_{n})$ to the second term and subtracting it from the last, we can rewrite KL divergence~\eqref{eq:kl-div-ce} in terms of collective transfer entropy~\eqref{eq:te} and stochastic interaction~\eqref{eq:stochastic-interaction}
	\begin{align} \label{eq:dist-cond-entropy}
	   	D_{\text{KL}}( p_D \parallel p_B ) &= - H ( \boldsymbol{Y}_{n+1} \mid \langle Y^{(\kappa^i)}_{n} \rangle ) + \sum_{i=1}^M H ( Y_{n+1}^i \mid Y^{i,(\kappa^i)}_{n} ) \nonumber \\
												   	 &\hspace{10mm} - \sum_{\tiny{\substack{i=1,\\\Pi_{G}(V^i) \neq \emptyset}}}^{M} \left[ H ( Y^i_{n+1} \mid Y^{i,(\kappa^i)}_{n} ) - H( Y^i_{n+1} \mid Y^{i,(\kappa^i)}_{n}, \langle Y^{ij,(\kappa^{ij})}_{n} \rangle_j ) \right] \nonumber \\
	   	 &= S_{\boldsymbol{Y}} - \sum_{\tiny{\substack{i=1,\\\Pi_{G}(V^i) \neq \emptyset}}}^{M} T_{ \{Y^{ij}\}_j \to Y^i}.
	\end{align}
	Note that, in~\eqref{eq:dist-cond-entropy}, we can remove the specification that the transfer entropy sum is over non-empty parent sets $\Pi_G( V^i ) \neq \emptyset$ since transfer entropy is a measure, and therefore, for any $Y^i$, $T_{\emptyset \to Y^i} = 0$, so $\sum_{\tiny{\substack{i=1,\Pi_{G}(V^i) \neq \emptyset}}}^{M}  T_{ \{Y^{ij}\}_j \to Y^i} = \sum_{i=1}^M  T_{ \{Y^{ij}\}_j \to Y^i}$, giving~\eqref{eq:kl-te}.
\end{proof}

\section{Scoring functions based on transfer entropy} \label{sec:scoring-functions}

There are a number of ways to score a candidate graph based on Theorem~\ref{theorem:divergence}. Here we present a corollary of this theorem from which we derive two scores: (1) \emph{transfer entropy with analytic independence tests}~(\textsc{tea}), and (2) \emph{transfer entropy with empirical independence tests}~(\textsc{tee}). First, we will show that a maximum likelihood-based approach is insufficient for structure learning.

\subsection{The maximum likelihood approach}
A common method to derive a score is to minimise the KL divergence between graph and empirical distributions~\citep{lam94a,friedman96a}. This score follows naturally from Theorem~\ref{theorem:divergence}. The following corollary shows that in practice it suffices to maximise the collective transfer entropy alone in order to minimise KL divergence for a synchronous \ac{GDS}.
\begin{corollary} \label{cor:min-kl}
The minimum KL divergence of a candidate graph $G$ from the empirical dataset $D$ is equivalent to the maximum transfer entropy graph, i.e.,
\begin{equation} \label{eq:argmin}
	\argmin_{G\in\mathcal{G}} D_{\text{KL}}( p_D \parallel p_B ) = \argmax_{G\in\mathcal{G}} \sum_{i=1}^{m} T_{ \{Y^{ij}\}_j \to Y^i}.
\end{equation}
\end{corollary}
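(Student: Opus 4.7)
The plan is to treat the corollary as an almost immediate consequence of Theorem~\ref{theorem:divergence}, with the crucial observation that exactly one of the two information-theoretic terms in the decomposition depends on the candidate graph. I would first restate the identity
\[
 D_{\text{KL}}( p_D \parallel p_B ) = S_{\boldsymbol{Y}} - \sum_{i=1}^{m} T_{ \{ Y^{ij} \}_j \to Y^i},
\]
and then argue that the stochastic interaction $S_{\boldsymbol{Y}}$ is a functional of the observed processes $\boldsymbol{Y}$ alone. Inspecting its definition~\eqref{eq:stochastic-interaction}, $S_{\boldsymbol{Y}}$ is built from $H(\boldsymbol{Y}_{n+1}\mid \langle Y^{i,(\kappa^i)}_n\rangle)$ and the per-subsystem entropies $H(Y^i_{n+1}\mid Y^{i,(\kappa^i)}_n)$; none of these quantities reference the parent sets $\{Y^{ij}\}_j$ of any candidate graph $G$. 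The embedding dimensions $\kappa^i$ arise from the reconstruction theorems applied to the true dynamical system and are therefore also invariant across $G\in\mathcal{G}$.

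The second step is then a one-line manipulation: since $S_{\boldsymbol{Y}}$ is constant in $G$,
\[
 \argmin_{G\in\mathcal{G}} D_{\text{KL}}( p_D \parallel p_B ) = \argmin_{G\in\mathcal{G}} \Bigl( - \sum_{i=1}^{m} T_{ \{Y^{ij}\}_j \to Y^i} \Bigr) = \argmax_{G\in\mathcal{G}} \sum_{i=1}^{m} T_{ \{Y^{ij}\}_j \to Y^i},
\]
which is exactly~\eqref{eq:argmin}. The only place where the choice of $G$ enters the right-hand side of Theorem~\ref{theorem:divergence} is through the sources $\{Y^{ij}\}_j = \Pi_G(V^i)$ appearing in each collective transfer entropy, so the graph-dependent part is entirely contained in that sum.

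The main obstacle, modest as it is, is justifying that $S_{\boldsymbol{Y}}$ genuinely has no hidden dependence on $G$. In particular one should note that the per-node embedding dimensions $\kappa^i$ used in $S_{\boldsymbol{Y}}$ are determined by the dimension of the underlying manifold $\mathcal{M}$ (via the Bundle Delay Embedding Theorem invoked in Lemma~\ref{lem:multivariate-takens}) and not by the candidate parent structure; once this is noted, the argmax/argmin equivalence follows mechanically. I would not expand the transfer entropy sum further, since Theorem~\ref{theorem:divergence} has already done the work, and the proof reduces to the observation that optimization is unaffected by additive constants.
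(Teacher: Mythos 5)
Your proposal is correct and follows essentially the same route as the paper: both rest on the observation that $S_{\boldsymbol{Y}}$ depends only on each subsystem's own past (and fixed $\tau$, $\kappa^i$), hence is constant over $G\in\mathcal{G}$, so minimising the KL divergence reduces to maximising the transfer entropy sum. Your added remark that the $\kappa^i$ are fixed by the reconstruction theorems rather than by the candidate parent structure is a slightly more explicit justification of a point the paper states by assumption.
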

\begin{proof}
The stochastic interaction term $S_{\boldsymbol{Y}}$ in~\eqref{eq:kl-te} is defined in terms of persistent variables, i.e., each variable $Y^i_{n+1}$ is conditioned only on its own past $Y^{i,(\kappa^i)}_n$. Stochastic interaction is therefore constant, given a constant vertex set $\mathcal{V}$, time delay $\tau$ and embedding dimension $\kappa$ and is thus unaffected by the parent set $\Pi_G(V^i)$ of a variable. This is evident in~\eqref{eq:dist-cond-entropy}, where only the latter sum depends on the parent set $\Pi_G(V^i)$.
As a result, stochastic interaction $S_{\boldsymbol{Y}}$ does not depend on the graph $G$ being considered, and, therefore
\begin{equation} \label{eq:min}
	\min_{G\in\mathcal{G}} D_{\text{KL}}( p_D \parallel p_B ) = \min_{G\in\mathcal{G}} \left( S_{\boldsymbol{Y}} - \sum_{i=1}^{m}T_{ \{Y^{ij}\}_j \to Y^i} \right) = S_{\boldsymbol{Y}} - \max_{G\in\mathcal{G}} \left( \sum_{i=1}^{m} T_{ \{Y^{ij}\}_j \to Y^i} \right).
\end{equation} %
Taking instead the arguments of the optima in~\eqref{eq:min} gives~\eqref{eq:argmin}.
\end{proof}
From Corollary~\ref{cor:min-kl}, a naive score can be defined as
\begin{equation} \label{eq:g-kl}
	g_{\textsc{te}}(B : D) = \sum_{i=1}^{m} T_{ \langle Y^{ij} \rangle_j \to Y^i}.
\end{equation}
However, this score is insufficient. Maximising collective transfer entropy will always yield a complete graph. For example, let $\mathcal{Y} = \{Y^1,Y^2,\ldots,Y^M\}$, then, for any $Y^i,Y^k \in \mathcal{Y}$ and $Y^j \in \mathcal{Y} \setminus \{Y^i, Y^k\}$,
\begin{gather}
	H( Y^i_{n+1} \mid Y^j_n \cup Y^k_n ) \leq H( Y^i_{n+1} \mid Y^j_n ) \nonumber \\
	\therefore T_{Y^{j} \cup Y^k\to Y^i} \geq T_{Y^{j} \to Y^i}. \nonumber
\end{gather}
The sum of transfer entropy in~\eqref{eq:g-kl} is therefore strictly non-decreasing when including more variables in a parent set. Further, since observations are taken from a finite number of samples $N$, a non-zero bias of conditional entropy is likely to result even in the absence of dependence, particularly under noisy observations.

\subsection{Penalising transfer entropy by independence tests}

\begin{figure}[t]
	\centering
	\subcaptionbox{\textsc{tea} penalty\label{fig:tea}}
	{\includegraphics[width=.4\columnwidth, clip=true, trim=10mm 60mm 18mm 90mm]{./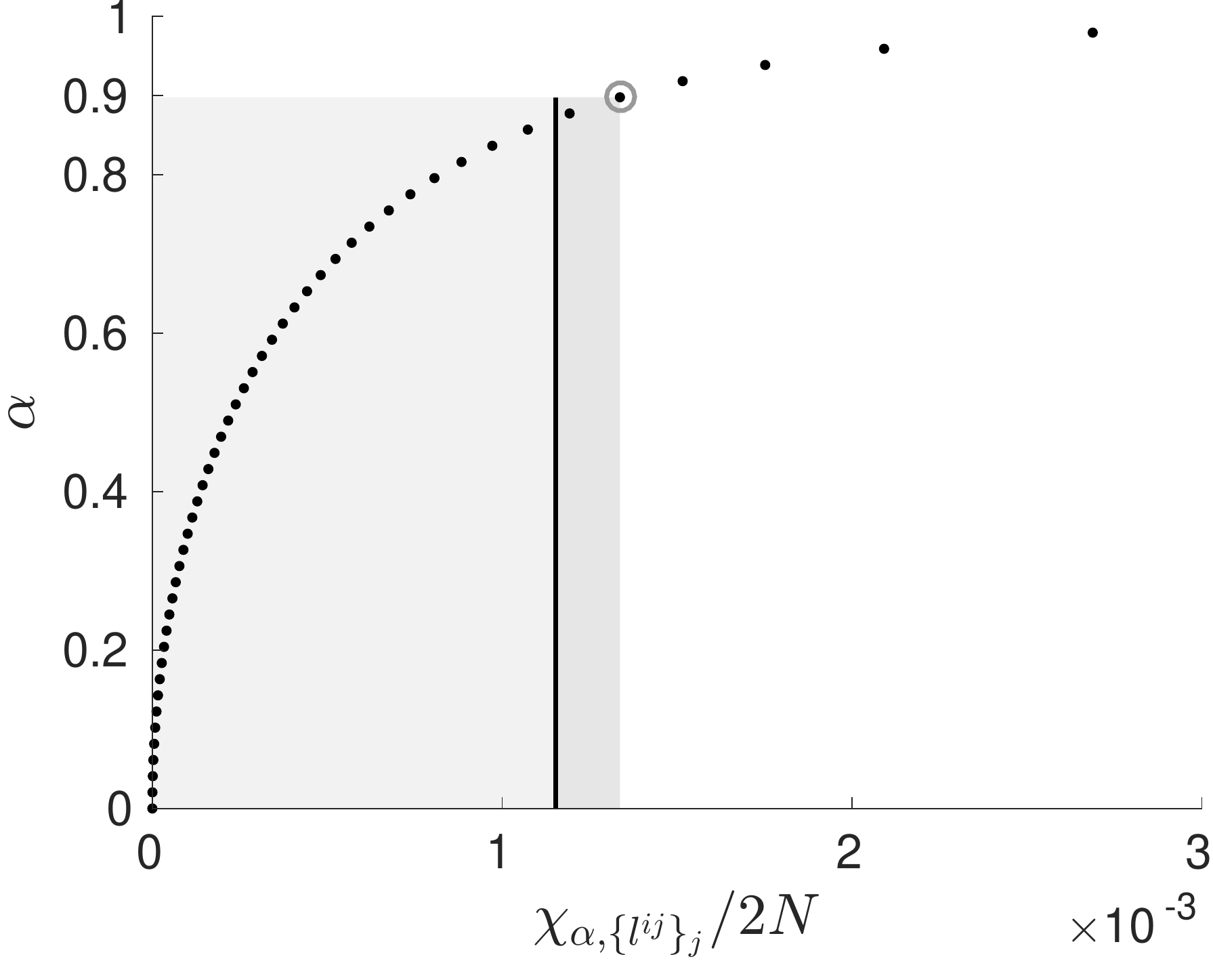}}
	\hspace{15mm}
	\subcaptionbox{\textsc{tee} penalty\label{fig:tee}}
	{\includegraphics[width=.4\columnwidth, clip=true, trim=10mm 60mm 18mm 90mm]{./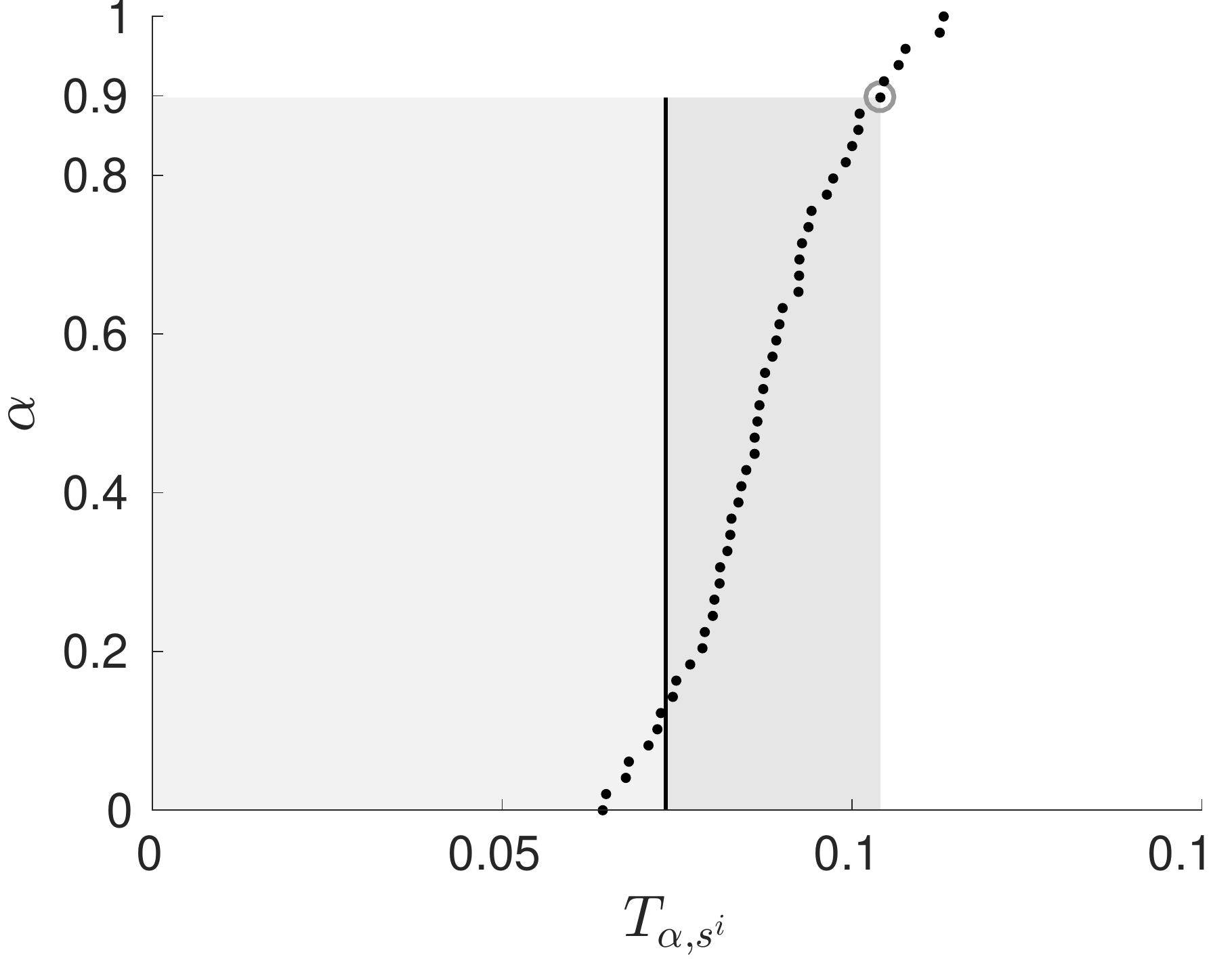}}
	\caption{Distributions of the~\ref{fig:tea} \textsc{tea} penalty function~\eqref{eq:g-tea} and the~\ref{fig:tee} \textsc{tee} penalty function~\eqref{eq:g-tea}. Both distributions were generated by observing the outcome of $1000$ samples from two Gaussian variables with a correlation of $0.05$. The figures illustrate: the distribution as a set of 100 sampled points (black dots); the area considered independent (grey regions); the measured transfer entropy (black line); and the difference between measurement and penalty term (dark grey region). Both tests use a value of $\alpha = 0.9$ (a $p$-value of $0.1$). The distribution in Fig.~\ref{fig:tea} was estimated by assuming variables were linearly-coupled Gaussians, and the distribution in Fig.~\ref{fig:tee} was computed via a kernal box method (computed by the JIDT, see~\citep{lizier14a} for details).}
	\label{fig:scores}
\end{figure}

%

Building on the maximum likelihood score~\eqref{eq:g-kl}, we propose to use independence tests to define two scores of practical value. Here, we draw on the result of \ifpaper	\citet{campos06a}, \else de Campos~\cite{campos06a}, \fi who derived a scoring function for \ac{BN} structure learning based on conditional mutual information and statistical significance tests, called \textsc{mit} (mutual information tests). The central idea is to use collective transfer entropy $T_{ \langle Y^{ij} \rangle_j \to Y^i}$ to measure the degree of interaction between each subsystem $V^i$ and its parent subsystems $\Pi_G( V^i )$, but also to penalise this term with a value based on significance testing. As with the \textsc{mit} score, this gives a principled way to re-scale the transfer entropy when including more edges in the graph.

To develop our scores, we form a \emph{null hypothesis} $H_0$ that there is no interaction $T_{ \langle Y^{ij} \rangle_j \to Y^i}$, and then compute a test statistic to penalise the measured transfer entropy. To compute the test statistic, it is necessary to  consider the measurement distribution in the case where the hypothesis is true. Fortunately, in the case of discrete and linear-Gaussian systems, the distribution $2 N T_{ \langle Y^{ij} \rangle_j \to Y^i}$ is known to asymptotically approach the $\chi^2$-distribution~\citep{barnett12a}. Since this distribution is a function of the parents of $Y^i$, we let it be described by the function $\chi^2( \{l^{ij}\}_j )$. Now, given this distribution, we can fix some \emph{confidence level} $\alpha$ and determine the value $\chi_{\alpha,\{l^{ij}\}_j}$ such that $p( \chi^2( \{l^{ij}\}_j ) \leq \chi_{\alpha,\{l^{ij}\}_j} )$. This represents a conditional independence test: if $2 N T_{ \langle Y^{ij} \rangle_j \to Y^i} \leq \chi_{\alpha,\{l^{ij}\}_j}$, then we accept the hypothesis of conditional independence between $Y^i$ and $\langle Y^{ij} \rangle_j$; otherwise, we reject it. We express this idea as the \textsc{tea} score:
\begin{equation} \label{eq:g-tea}
	g_{\textsc{tea}}( B : D ) = \sum_{i=1}^M \left( 2 N T_{ \{Y^{ij}\}_j \to Y^i} - \chi_{\alpha,\{l^{ij}\}_j} \right).
\end{equation}

We can derive a more general form of the \textsc{tea} score~\eqref{eq:g-tea} via \emph{surrogate} measurements $T_{ \langle Y^{ij} \rangle_j^s \to Y^i}$ under the assumption of $H_0$~\cite{lizier14a}. This same technique has been used by \ifpaper \cite{lizier12c}\else Lizier and Rubinov~\cite{lizier12c} \fi to derive a greedy structure learning algorithm for effective network analysis. Here, $\langle Y^{ij} \rangle_j^s$ are surrogate sets of variables for $\langle Y^{ij} \rangle_j$, which have the same statistical properties as $\langle Y^{ij} \rangle_j$, but the correlation between $\langle Y^{ij} \rangle_j^s$ and $Y^i$ is removed. Let the distribution of these surrogate measurements be represented by some general function $T(s^i)$, and note that for the systems described for the \textsc{tea} score~\eqref{eq:g-tea}, we could compute $T(s^i)$ analytically as an independent set of $\chi^2$-distributions $\chi^2( \{l^{ij}\}_j )$. Where no analytic distribution is known, we use a resampling method (i.e., permutation or bootstrapping), creating a large number of surrogate time-series pairs $\{\langle Y^{ij} \rangle_j^s, Y^i\}$ by shuffling (for permutations, or redrawing for bootstrapping) the samples of $Y^i$ and computing a population of $T_{ \langle Y^{ij} \rangle_j^s \to Y^i}$. As with the \textsc{tea} score, we fix some confidence level $\alpha$ and determine the value $T_{\alpha,s^i}$, such that $p(T(s^i) \leq T_{\alpha,s^i}) = \alpha$. This results in the \textsc{tee} scoring function as
\begin{equation} \label{eq:g-tee}
	g_{\textsc{tee}}( B : D ) = \sum_{i=1}^M \left( T_{ \{Y^{ij}\}_j \to Y^i} - T_{\alpha,s^i} \right).
\end{equation}
We can obtain the value $T_{\alpha,s^i}$ by (1) drawing $N^s$ samples $T_{ \langle Y^{ij} \rangle_j^s \to Y^i}$ from the distribution $T(s^i)$ (by permutation or bootstrapping), (2) fixing $\alpha \in \{ 0, 1/N^s, 2/N^s, \ldots, 1 \}$, then (3) taking $T_{\alpha,s^i}$ such that
\begin{equation} \nonumber
	\alpha = \frac{1}{N^s} \sum_{T_{ \{Y^{ij}\}_j \to Y^i}} \mathbbm{1}_{T_{ \{Y^{ij}\}_j^s \to Y^i} \leq T_{\alpha,s^i}}.
\end{equation}
We can alternatively limit the number of surrogates $N_s$ to $\lceil \alpha / (1-\alpha) \rceil$ and take the maximum as $T_{\alpha,s^i}$~\cite{kantz04a}, however taking a larger number of surrogate $N_s$ will improve the validity of the distribution $T(s^i)$.

\subsection{Analysis of the scores} \label{sec:ident}

Given the \textsc{tea} and \textsc{tea} scoring functions, the optimal graph $G^*$ can be found using any search procedure over DAGs. Exhaustive search, where DAGs are enumerated and scored, is intractable because the search space is super-exponential in the number of variables (about $2^{\mathcal{O}(M^2)}$). It is therefore common to employ local search methods such as greedy hill climbing, basin flooding and tabu search~\citep{koller09a}. In this section, we discuss two properties of the scoring functions that facilitate these search procedures: \emph{decomposability} and \emph{score-equivalence}.

A decomposable score is a sum of local scores that depend only on a variable and its parents, i.e.,
\begin{gather}
	g( B : D ) = \sum_{i=1}^M g( V^i, \Pi_G( V^i ) : D ), \nonumber \\
	g( V^i, \Pi_G( V^i ) : D ) = g( V^i, \Pi_G( V^i ) : N^{D}_{V^i, \Pi_G( V^i )} ), \nonumber
\end{gather}
where $N^{D}_{V^i, \Pi_G( V^i )}$ are sufficient statistics for the set of variables $V^i \cup \Pi_G( V^i )$ in $D$~\citep{campos06a}. Given the independent sums in~\eqref{eq:g-tea} and~\eqref{eq:g-tee}, the \textsc{tea} and \textsc{tee} scoring functions are decomposable. Further, the \textsc{tea} score~\eqref{eq:g-tea} can be decomposed as a sum of conditional mutual information tests, i.e.,
\begin{equation} \nonumber
	g_{\textsc{tea}}( B : D ) = \sum_{i=1}^M \left( 2 N T_{ \{Y^{ij}\}_j \to Y^i} - \sum_{j=1}^{p^i} \chi_{\alpha,l^{ij}} \right),
\end{equation}
where $p^i$ is the number of parents of subsystem $V^i$. This approach is more efficient as it allows for caching the results of $\chi_{\alpha,\{l^{ij}\}_j}$ incrementally~\citep{campos06a}. Note that although any decomposition of collective transfer entropy yields the same value, the ordering of conditioning on the variables $Y^{ij}$ in the penalty term affects the score. This issue can be resolved by penalising the score conservatively by using the maximum permutation of the $\chi_{\alpha,\{\l^{ij}\}_j}$ value; an in-depth explanation of this approach can be found in \ifpaper \citep{campos06a} \else de Campos'~\cite{campos06a} \fi discussion of the maximum penalty permutation (Theorem~2) and Shur-concavity (Theorem~3) of the penalty term.

Score-equivalence in \ac{BN} structure learning simplifies the evaluation and identification problems by constraining the search space to a set of \emph{essential graphs}, which is a set of \emph{equivalence classes} over \acp{DAG}~\citep{chickering02a}. Because \textsc{tea} and \textsc{tee} are specific cases of the \textsc{mit} score~\citep{campos06a}, they are \emph{not} score-equivalent. However, they do satisfy the less demanding property of equivalence in the space of \acp{RPDAG}~\citep{acid03a}. Thus, these scoring functions assign the same value to all \acp{DAG} that are represented by the same \acp{RPDAG}. With a decomposable scoring function, searching in the space of \acp{RPDAG} is more efficient than searching through essential graphs, and has been shown to yield  better local optima than other local search techniques in practice~\citep{acid03a}.




\ifpaper

\section{Applications}

\subsection{Coupled logistic maps}

\oc{~\cite{kantz04a} gives a great reason for using map-like (i.e., discrete) dynamical systems to model continuous systems (normally because we take measurements at discrete intervals anyway).}

In order to evaluate our approach, we use a simple but expressive model for coupled dynamical systems called \emph{coupled maps}. Coupled maps are a generalisation of the coupled map lattice model where the topology need not be a lattice structure. These types of systems are ubiquitous in literature as a toy example for complex, chaotic behaviour arising from simple, non-linear dynamical equations.

To simulate the coupled map dynamics: denote the number of parents of subsystem $i$ as $p^i = |\Pi_{\mathcal{G}}(V^i)|$; then, we expand the local map $f^i$ as
\begin{equation} \label{eq:coupled-map}
	x_{n+1}^i = (1-\varepsilon)g^i(x^i_n) + (\varepsilon/p^i) \sum_{j=1}^{p^i} g^i(x^{ij}_n)  + \upsilon_{f^i},
\end{equation}
where $0 < \varepsilon < 1$ and for $g^i(z)$ we can use any map with chaotic behaviour.

\oc{In my understanding we can use any map from the \href{https://en.wikipedia.org/wiki/List_of_chaotic_maps}{Wiki page on chaotic maps} to simulate toy examples. \cite{kantz04a} also show some nice real world examples of biological, neurological and physical (laser data) systems}

Here, we use the logistic map $$g^i(z) = r z ( 1 - z ).$$
To observe the system, we can simply map a multivariate state to the real line, i.e., let the multivariate state be defined in terms of $d^i$ components $x^i_n = \langle x^i_n[0], x^i_n[1], \ldots, x^i_n[d^i] \rangle$, then we can define the observable as
\begin{equation} \nonumber
	y^i_{n+1} = x^i_{n+1}[0] + \nu_{{\psi^i}}.
\end{equation}
This approach can be considered a mean-field-type global coupling, where the \oc{\ldots}

\begin{figure}
	\centering
	\hfill
	\begin{subfigure}[b]{.32\textwidth}
		\centering
		\includegraphics[width=\textwidth, clip=true, trim=35mm 115mm 40mm 120mm]{../figs/CLM_N10000_M3_time_v1.pdf}\\
		\includegraphics[width=\textwidth, clip=true, trim=50mm 95mm 55mm 100mm]{../figs/CLM_N10000_M3_phase_v1.pdf}
		\caption{}
	\end{subfigure}\hfill
	\begin{subfigure}[b]{.32\textwidth}
		\centering
		\includegraphics[width=\textwidth, clip=true, trim=35mm 115mm 40mm 120mm]{../figs/CLM_N10000_M3_time_v2.pdf}\\
		\includegraphics[width=\textwidth, clip=true, trim=50mm 95mm 55mm 100mm]{../figs/CLM_N10000_M3_phase_v2.pdf}
		\caption{}
	\end{subfigure}\hfill
	\begin{subfigure}[b]{.32\textwidth}
		\centering
		\includegraphics[width=\textwidth, clip=true, trim=35mm 115mm 40mm 120mm]{../figs/CLM_N10000_M3_time_v3.pdf}\\
		\includegraphics[width=\textwidth, clip=true, trim=50mm 95mm 55mm 100mm]{../figs/CLM_N10000_M3_phase_v3.pdf}
		\caption{}
	\end{subfigure}
	\caption{Simulated coupled logistic maps. \emph{Top row}: first 200 sequential measurements of the coupled maps (from left to right, vertices $v_1$, $v_2$, and $v_3$). \emph{Bottom row}: phase portrait of the corresponding observations with time delay $\tau=1$ and embedding dimension $\kappa=2$.}
	\label{fig:coupled-logistic-map}
\end{figure}


\subsection{Coupled R\"{o}ssler-Lorenz systems}

In this section we take a number of coupled continuous dynamical systems\ldots

\fi

\section{Discussion and future work}


We have presented a principled method to learn the structure of a synchronous \ac{GDS} based on collective transfer entropy and independence tests. We derived this method analytically by reformulating the KL divergence of factorised from joint distributions of a network, which Theorem~\ref{theorem:divergence} shows can be computed in terms of stochastic interaction and transfer entropy. We arrived at this result by first reconsidering the \ac{GDS} as a \ac{DBN}, and then employed generalised versions of Takens' embedding theorem to compute densities comprising hidden and observed variables.



The decomposition of KL divergence in Theorem~\ref{theorem:divergence} captures an interesting parallel between fully observable systems and partially observable systems. \ifpaper \cite{campos06a} \else De Campos~\cite{campos06a} \fi showed previously that the KL divergence in a fully observable system is given by the difference between multi-information~\citep{studeny98a} and mutual information.\footnote{Although it is not derived in~\cite{campos06a}, it is trivial to show the first two terms constitute multi-information.} Specifically, a condition for generalised Takens' theorems to hold is that the observation functions $\{\psi^i\}$ are injective~\citep{deyle11a,stark97a}. We conjecture that if the functions are also surjective (i.e., there is a one-to-one mapping between state and observation), the embedding dimension would reduce to unity and we would arrive at the \textsc{mit} scoring function.

In Corollary~\ref{cor:min-kl}, we have shown that, under certain circumstances, maximising collective transfer entropy minimises the KL divergence of a model from the true distribution. KL divergence is related to model encoding, which is a fundamental measure used in complex systems analysis. Our result, therefore, has potential implications to other areas of complex systems research. For example, the notion of equivalence classes in \ac{BN} structure learning should lend some insight into the area of effective network analysis~\citep{sporns04a,park13a}. We believe the concepts of effective networks referred to in complex systems literature can be unified with essential graphs and \acp{RPDAG}. This would allow for a more rigorous definition of effective networks and a benchmark for analysing the efficacy of an algorithm to reconstruct these networks.

We have presented the \textsc{tea}~\eqref{eq:g-tea} and \textsc{tee}~\eqref{eq:g-tee} scores above based on the \textsc{mit} scoring function~\citep{campos06a}. These scoring functions, however, could be considered to be a generalisation of \textsc{mit}. There are numerous approaches to recover the time delay $\tau$ and embedding dimension $\kappa$ for use in transfer entropy~\cite{ragwitz02a,small04a}. Given a system of fully observed variables, these criteria should optimally select no embedding dimension or time delay, and thus as a special case of our scores we obtain the \textsc{mit} algorithm with time-lagged mutual information.


\ifpaper
	\acks{
\else
	\section*{Acknowledgements}
\fi
	This work was supported in part by the Australian Centre for Field Robotics; the New South Wales Government; and the Faculty of Engineering \& Information Technologies, The University of Sydney, under the Faculty Research Cluster Program. Special thanks to Joseph Lizier, J\"{u}rgen Jost, and Wolfram Martens for their incite in regards to dynamical systems.
\ifpaper
	}
\fi

\appendix
\section*{Appendix A. Embedding theory}

We refer here to embedding theory as the study of inferring the (hidden) state $\boldsymbol{x}_n \in \mathcal{M}$ of a dynamical system from a sequence of observations $y_n \in \mathbb{R}$. This section will cover reconstruction theorems that define the conditions under which we can use delay embeddings for recovering the original dynamics $f$ from this observed time series.

In differential topology, an \emph{embedding} refers to a smooth map $\boldsymbol{\Phi}: \mathcal{M} \to \mathcal{N}$ between manifolds $\mathcal{M}$ and $\mathcal{N}$ if it maps $\mathcal{M}$ diffeomorphically onto its image. In Takens seminal work on turbulent flow~\cite{takens81a}, he proposed a map $\boldsymbol{\Phi}_{f,\psi}: \mathcal{M} \to \mathbb{R}^\kappa$, that is composed of delayed observations, can be used to reconstruct the dynamics for typical $(f,\psi)$. That is, fix some $\kappa$ (the \emph{embedding dimension}) and $\tau$ (the \emph{time delay}), the \emph{delay embedding map}, given by
\begin{equation} \label{eq:delay-embedding-map}
	\boldsymbol{\Phi}_{f,\psi}( \boldsymbol{x}_n ) = y^{(\kappa)}_n = \langle y_{n}, y_{n+\tau}, y_{n+2\tau},\ldots, y_{n+(\kappa-1)\tau} \rangle,
\end{equation}
is an embedding. More formally, denote $\boldsymbol{\Phi}_{f,\psi}$, $\mathcal{D}^r(\mathcal{M},\mathcal{M})$ as the space of $C^r$-diffeomorphisms on $\mathcal{M}$ and $C^r(\mathcal{M},\mathbb{R})$ as the space of $C^r$-functions on $\mathcal{M}$, then the theorem can be expressed as follows.
\begin{theorem}[Delay Embedding Theorem for Diffeomorphisms~\cite{takens81a}] \label{th:delay-embedding-diffeo}
	Let $\mathcal{M}$ be a compact manifold of dimension $d \geq 1$. If $\kappa \geq 2d + 1$ and $r \geq 1$, then there exists an open and dense set $(f,\psi) \in \mathcal{D}^r(\mathcal{M},\mathcal{M})\times C^r(\mathcal{M},\mathbb{R})$ for which the map $\boldsymbol{\Phi}_{f,\psi}$ is an embedding of $\mathcal{M}$ into $\mathbb{R}^{\kappa}$. 
\end{theorem}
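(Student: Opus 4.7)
The plan is to follow the classical strategy of Takens' original 1981 proof, showing that the set
\[
\mathcal{E} = \{(f,\psi) \in \mathcal{D}^r(\mathcal{M},\mathcal{M}) \times C^r(\mathcal{M},\mathbb{R}) : \boldsymbol{\Phi}_{f,\psi} \text{ is an embedding}\}
\]
is both open and dense. Since $\mathcal{M}$ is compact, a smooth injective immersion is automatically an embedding, so I will reduce the problem to proving that (i) $d\boldsymbol{\Phi}_{f,\psi}$ has rank $d$ at every point (immersion), and (ii) $\boldsymbol{\Phi}_{f,\psi}$ is injective. Openness then follows from a standard $C^1$-stability argument: rank and injectivity of a smooth map on a compact manifold are $C^1$-open conditions, since the rank is lower-semicontinuous and the minimum separation $\inf_{x\neq y} \lVert \boldsymbol{\Phi}(x)-\boldsymbol{\Phi}(y) \rVert$ varies continuously in the $C^1$ topology on the complement of a shrinking diagonal neighbourhood where the immersion property controls the local injectivity.

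For density, I would decompose $\mathcal{M}$ into two regions: the aperiodic region $\mathcal{M}_{\text{ap}}$ (points with no period $p \leq \kappa$) and the periodic region $\mathcal{M}_{\text{p}} = \bigcup_{p \leq \kappa} \mathrm{Fix}(f^p)$. On $\mathcal{M}_{\text{ap}}$, the $\kappa$ iterates $x, f(x), \ldots, f^{\kappa-1}(x)$ are distinct, and the coordinate entries $\psi\circ f^j$ of $\boldsymbol{\Phi}_{f,\psi}$ can be controlled essentially independently by perturbing $\psi$ at each of these $\kappa$ points. A Thom transversality argument applied to the evaluation jet map $(f,\psi,x) \mapsto j^1_{x}\boldsymbol{\Phi}_{f,\psi}$ shows that for a residual set of $(f,\psi)$, the differential $d\boldsymbol{\Phi}_{f,\psi}(x)$ avoids the codimension-$(\kappa-d+1)$ stratum of rank-deficient $\kappa \times d$ matrices; since $\kappa \geq 2d+1 > d$, this yields immersivity throughout $\mathcal{M}_{\text{ap}}$. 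Similarly, injectivity reduces to showing the diagonal-avoiding map $\mathcal{M}_{\text{ap}} \times \mathcal{M}_{\text{ap}} \setminus \Delta \to \mathbb{R}^\kappa \times \mathbb{R}^\kappa$ is transverse to the diagonal of $\mathbb{R}^\kappa \times \mathbb{R}^\kappa$: this is a codimension-$\kappa$ condition against a $2d$-dimensional source, and $\kappa \geq 2d+1$ forces the preimage to be empty for generic $(f,\psi)$.

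The hard part will be the periodic region $\mathcal{M}_{\text{p}}$. Here two difficulties arise: first, at a fixed point $x$, all entries of $\boldsymbol{\Phi}_{f,\psi}(x)$ collapse to the single value $\psi(x)$, so $\boldsymbol{\Phi}$ cannot distinguish $x$ from its preimages by coordinate spread alone; second, the rank calculation on $d\boldsymbol{\Phi}$ degenerates because the iterated differentials $Df^j(x)$ are constrained by the dynamics at a periodic orbit. I would handle this by first applying a Kupka–Smale style perturbation of $f$ alone to ensure that, for each $p \leq \kappa$, the periodic points of period exactly $p$ form a finite, hyperbolic set, and that the derivatives $Df^p(x)$ at these points have no eigenvalue equal to $1$ and distinct eigenvalues $\lambda_1,\ldots,\lambda_d$; this makes $\mathcal{M}_{\text{p}}$ a finite union of isolated orbits. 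Then, on each such orbit, I would perturb $\psi$ locally to satisfy the finitely many open algebraic conditions that guarantee $d\boldsymbol{\Phi}_{f,\psi}$ has full rank (this reduces, as in Takens' original computation, to the invertibility of a Vandermonde-like matrix in the eigenvalues $\lambda_i^j$, which generically holds because the $\lambda_i$ are distinct) and that $\boldsymbol{\Phi}_{f,\psi}$ separates the finitely many periodic points, using the $C^r$ density of bump-supported perturbations of $\psi$.

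Finally, I would combine the aperiodic and periodic arguments: the intersection of the (countably many, open, dense) conditions gives a residual and hence dense set in $\mathcal{D}^r(\mathcal{M},\mathcal{M}) \times C^r(\mathcal{M},\mathbb{R})$ on which $\boldsymbol{\Phi}_{f,\psi}$ is an injective immersion, and therefore by compactness an embedding; intersecting with the already-established open set $\mathcal{E}$ verifies both openness and density, completing the proof.
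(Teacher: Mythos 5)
The paper does not actually prove this statement: Theorem~\ref{th:delay-embedding-diffeo} appears in Appendix~A as imported background, attributed to Takens (1981), with only an informal remark afterwards recording the technical genericity conditions (finitely many periodic points of low period, distinct eigenvalues not equal to $1$). Your sketch is therefore not competing with an in-paper argument; it is a reconstruction of the classical proof, and as an outline it follows the standard route faithfully: reduce to injective immersion via compactness of $\mathcal{M}$, get openness from $C^1$-stability of injective immersions, and get density by splitting $\mathcal{M}$ into the aperiodic part (multijet transversality) and the low-period periodic part (Kupka--Smale perturbation of $f$, then a Vandermonde-type rank computation in the distinct eigenvalues after perturbing $\psi$). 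This is exactly the architecture of Takens' original argument and of later expositions, so the approach is sound.

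Two points in your aperiodic-region argument are stated imprecisely and would need repair in a full write-up. First, for immersivity the relevant inequality is not ``$\kappa \geq 2d+1 > d$'': avoiding the codimension-$(\kappa-d+1)$ stratum of rank-deficient matrices over a $d$-dimensional source requires $\kappa - d + 1 > d$, i.e.\ $\kappa \geq 2d$; your hypothesis implies this, but the justification as written invokes the wrong comparison. Second, the claim that the $\kappa$ delay coordinates ``can be controlled essentially independently by perturbing $\psi$ at each of these $\kappa$ points'' fails precisely where the injectivity argument is hardest: for pairs on the same orbit, $y = f^j(x)$, the vectors $\boldsymbol{\Phi}_{f,\psi}(x)$ and $\boldsymbol{\Phi}_{f,\psi}(y)$ share the values $\psi(f^i(x))$ in shifted slots, so the perturbations of the two images are strongly correlated; one must check that the difference map $(\epsilon_i) \mapsto (\epsilon_i - \epsilon_{i+j})_{i=0}^{\kappa-1}$ is still surjective (it is, by a chain argument), and the mixed case of one periodic and one aperiodic point needs its own treatment. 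These are exactly the case analyses that occupy most of Takens' proof, so a referee would flag their absence; as a blind outline, however, your proposal identifies all the essential ingredients, including the ones the paper itself only alludes to in prose.
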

The implication of Theorem~\ref{th:delay-embedding-diffeo} is that, for typical $(f,\psi)$, the image $\boldsymbol{\Phi}_{f,\psi}(\mathcal{M})$ of $\mathcal{M}$ under the delay embedding map $\mathbf{\Phi}_{f,\psi}$ is completely equivalent to $\mathcal{M}$ itself, apart from the smooth invertible change of coordinates given by the mapping $\boldsymbol{\Phi}_{f,\psi}$. An important consequence of this result is that we can define a map $\mathbf{F} = \boldsymbol{\Phi}_{f,\psi} \circ f \circ \boldsymbol{\Phi}_{f,\psi}^{-1}$ on $\boldsymbol{\Phi}_{f,\psi}$, such that $ y^{(\kappa)}_{n+1} = \mathbf{F}( y^{(\kappa)}_n ) $~\cite{stark97a}. The bound for the open and dense set referred to in Theorem~\ref{th:delay-embedding-diffeo} is given by a number of technical assumptions. Denote $(Df)_{\boldsymbol{x}}$ as the derivative of function $f$ at a point $\boldsymbol{x}$ in the domain of $f$. The set of periodic points $A$ of $f$ with period less than $\tau$ has finitely many points. In addition, the eigenvalues of $(Df)_{\boldsymbol{x}}$ at each $\boldsymbol{x}$ in a compact neighbourhood $A$ are distinct and not equal to 1.

Importantly, Theorem~\ref{th:delay-embedding-diffeo} was established for diffeomorphisms $\mathcal{D}^r$; by definition the dynamics are thus invertible in time. So the time delay $\tau$ in~\eqref{eq:delay-embedding-map} can be either positive (delay lags) or negative (delay leads). Takens later proved a similar result for endomorphisms, i.e., non-invertible maps that restricts the time delay to a negative integer. Denote by $\mathcal{E}(\mathcal{M},\mathcal{M})$ the set of the space of $\mathcal{C}^r$-endomorphisms on $\mathcal{M}$, then the reconstruction theorem for endomorphisms can be expressed as the following.
\begin{theorem}[Delay Embedding Theorem for Endomorphisms~\cite{takens02a}] \label{th:delay-embedding-endo}
	Let $\mathcal{M}$ be a compact $m$ dimensional manifold. If $\kappa \geq 2d + 1$ and $r \geq 1$, then there exists an open and dense set $(f,\psi) \in \mathcal{D}^r(\mathcal{M},\mathcal{M})\times C^r(\mathcal{M},\mathbb{R})$ for which there is a map $\pi_\kappa : \mathcal{X}_\kappa \to \mathcal{M}$ with $\pi_\kappa \mathbf{\Phi}_{f,\psi} = f^{\kappa-1}$. Moreover, the map $\pi_\kappa$ has bounded expansion or is Lipschitz continuous.
\end{theorem}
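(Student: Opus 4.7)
The plan is to adapt Takens' transversality argument from the diffeomorphism case (Theorem~\ref{th:delay-embedding-diffeo} above) to the endomorphism setting, where the principal new difficulty is the loss of time-reversibility and the presence of critical points of $f$. Because $f$ is non-invertible, I would work exclusively with delay \emph{leads} (forward iterates of $f$), defining $\mathbf{\Phi}_{f,\psi}(x) = \langle \psi(x), \psi(f(x)), \ldots, \psi(f^{\kappa-1}(x)) \rangle$, so that the whole construction depends only on forward orbits, which are always well-defined.

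Next I would isolate the generic condition on $(f,\psi)$ that guarantees $\mathbf{\Phi}_{f,\psi}$ is injective on $\mathcal{M}$. The failure set consists of pairs $(x,y)\in\mathcal{M}\times\mathcal{M}$ with $x\neq y$ and matching $\kappa$-tuples of observations. Following Takens, I would parametrise this failure set by the $\kappa$-fold multi-jet extension of $(\psi,\psi\circ f,\ldots,\psi\circ f^{\kappa-1})$ and apply a multi-jet transversality theorem to the smooth mappings $f\in\mathcal{D}^r(\mathcal{M},\mathcal{M})$ and $\psi\in C^r(\mathcal{M},\mathbb{R})$. A dimension count then shows that, because $\kappa\geq 2d+1$, the bad locus has codimension strictly greater than $\dim(\mathcal{M}\times\mathcal{M})=2d$, so the generic perturbation avoids it entirely. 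Some extra care is needed at periodic points of $f$ of period at most $\kappa$, which I would handle exactly as in Takens' original argument by requiring that the finite set of short-period points carry distinct $\psi$-values and that the eigenvalues of $(Df)_x$ there be distinct. The resulting set of admissible $(f,\psi)$ is open and dense in $\mathcal{D}^r(\mathcal{M},\mathcal{M})\times C^r(\mathcal{M},\mathbb{R})$.

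With injectivity in hand, I would define $\pi_\kappa$ directly on the image $\mathcal{X}_\kappa=\mathbf{\Phi}_{f,\psi}(\mathcal{M})$ by $\pi_\kappa(\mathbf{\Phi}_{f,\psi}(x))=f^{\kappa-1}(x)$. The identity $\pi_\kappa\circ\mathbf{\Phi}_{f,\psi}=f^{\kappa-1}$ then holds by construction. Since $\mathcal{M}$ is compact and $\mathbf{\Phi}_{f,\psi}$ is a continuous injection, it is a homeomorphism onto $\mathcal{X}_\kappa$, so $\pi_\kappa$ is at least continuous.

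The main obstacle is the quantitative claim that $\pi_\kappa$ has bounded expansion or is Lipschitz. Unlike the diffeomorphism case, $\mathbf{\Phi}_{f,\psi}$ need not be an immersion at critical points of $f$, so its inverse can a priori be only H\"older. The strategy I would use is to show that, on the generic set, there is a uniform lower bound of the form $\|\mathbf{\Phi}_{f,\psi}(x)-\mathbf{\Phi}_{f,\psi}(y)\|\geq c\,d_{\mathcal{M}}(x,y)$ for some $c>0$, outside a neighbourhood of the finite short-period set; together with the local control provided by the distinct-eigenvalue condition at periodic points, this yields a global Lipschitz constant $L=1/c$ for $\pi_\kappa$. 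I would obtain this lower bound by combining compactness of $\mathcal{M}$, the $C^r$ smoothness of $f$ and $\psi$, and the quantitative transversality that underpinned the injectivity step. Openness of the admissible set then follows because the Lipschitz estimate is stable under sufficiently small $C^r$ perturbations of $(f,\psi)$, completing the argument.
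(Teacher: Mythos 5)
Note first that the paper does not prove Theorem~\ref{th:delay-embedding-endo} at all: it is quoted as background from Takens' 2002 paper, so your proposal must be judged against the actual content of that theorem, and it misses the theorem's central mechanism. The decisive gap is your injectivity step. Generic injectivity of $\mathbf{\Phi}_{f,\psi}$ is \emph{false} for endomorphisms, and the dimension count you invoke breaks down exactly where non-invertibility matters. If $x \neq y$ but $f(x)=f(y)$, the delay vectors of $x$ and $y$ agree automatically in every coordinate after the first, so on the collision set $C=\{(x,y): x\neq y,\ f(x)=f(y)\}$ the ``bad locus'' is cut out by the single equation $\psi(x)=\psi(y)$, not by $\kappa \geq 2d+1$ independent equations; perturbations of $\psi$ cannot move the two delay vectors independently along $C$, which is precisely a failure of the transversality you want to apply. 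Since $C$ is generically $d$-dimensional (preimage of the diagonal under $f\times f$), the failure set has dimension $d-1\geq 0$, and it can be topologically unavoidable: for the doubling map $f(\theta)=2\theta$ on $S^1$, the function $g(\theta)=\psi(\theta)-\psi(\theta+\pi)$ satisfies $g(\theta+\pi)=-g(\theta)$ and hence vanishes for \emph{every} $\psi$; at such a zero, $\theta$ and $\theta+\pi$ have identical delay vectors of every length, and no perturbation of $(f,\psi)$ within the endomorphisms removes this. This is exactly why Takens' endomorphism theorem concludes only that the \emph{final} state is recoverable: the real work is to show that, generically, whenever $\mathbf{\Phi}_{f,\psi}(x)=\mathbf{\Phi}_{f,\psi}(y)$ the orbits have already merged, so that $f^{\kappa-1}(x)=f^{\kappa-1}(y)$ and $\pi_\kappa$ is well defined on the image \emph{without} injectivity. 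Transversality and the bound $\kappa\geq 2d+1$ are needed only to exclude coincidences between points whose orbits do \emph{not} merge within the observation window.

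The same error propagates into your Lipschitz step: a uniform bound $\|\mathbf{\Phi}_{f,\psi}(x)-\mathbf{\Phi}_{f,\psi}(y)\| \geq c\, d_{\mathcal{M}}(x,y)$ cannot hold, because near a collision pair the left-hand side tends to zero while $d_{\mathcal{M}}(x,y)$ stays bounded away from zero. The correct quantitative statement bounds $d_{\mathcal{M}}(f^{\kappa-1}(x),f^{\kappa-1}(y))$ in terms of $\|\mathbf{\Phi}_{f,\psi}(x)-\mathbf{\Phi}_{f,\psi}(y)\|$, which is compatible with orbit merging (both sides vanish together), and its proof in~\cite{takens02a} requires a local analysis of the generic critical/merging structure of $f$ rather than a global injectivity constant. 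Your choice of forward iterates (so the recoverable state sits at the end of the window, matching the paper's remark that negative delays are required) and the formal identity $\pi_\kappa\circ\mathbf{\Phi}_{f,\psi}=f^{\kappa-1}$ are consistent with the theorem, but as written your argument proves a statement --- that $\mathbf{\Phi}_{f,\psi}$ is generically an embedding --- which is strictly stronger than the theorem and is contradicted in the endomorphism setting.
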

As a result of Theorem~\ref{th:delay-embedding-endo}, a sequence of $\kappa$ successive measurements from a system determines the system state \emph{at the end} of the sequence of measurements~\cite{takens02a}. That is, there exists an endomorphism $\mathbf{F} = \boldsymbol{\Phi}_{f,\psi} \circ f \circ \boldsymbol{\Phi}_{f,\psi}^{-1}$ to predict the next observation if one takes a negative time (lead) delay $\tau$ in~\eqref{eq:delay-embedding-map}.

In this work, we consider two important generalisations of the Delay Embedding Theorem~\ref{th:delay-embedding-diffeo}. Both of these theorems follow similar proofs to the original and have thus been derived for diffeomorphisms, not endomorphisms. However, encouraging empirical results in~\cite{schumacher15a} support the conjecture that they can both be generalised to the case of endomorphisms by taking a negative time delay, as is done in Theorem~\ref{th:delay-embedding-endo} above.

The first generalisation is by Stark et al.~\cite{stark97a} and deals with a skew-product system. That is, $f$ is now forced by some second, independent system $g:\mathcal{N}\to\mathcal{N}$. The dynamical system on $\mathcal{M}\times\mathcal{N}$ is thus given by the set of equations
\begin{equation}
	x_{n+1} = f(x_n, \omega_n), \hspace{5mm} \omega_{n+1}=g(\omega_n).
\end{equation}
In this case, the delay map is written as
\begin{equation}
	\mathbf{\Phi}_{f,g,\psi}(x,\omega) = \langle y_n, y_{n+\tau}, y_{n+2\tau},\ldots, y_{n+(\kappa-1)\tau} \rangle,
\end{equation}
and the theorem can be expressed as follows.
\begin{theorem}[Bundle Delay Embedding Theorem~\cite{stark97a}] \label{th:bundle-embedding}
	Let $\mathcal{M}$ and $\mathcal{N}$ be compact manifolds of dimension $d \geq 1$ and $e$ respectively. Suppose that $\kappa \geq 2(d+e)+1$ and the periodic orbits of period $\leq d$ of $g \in \mathcal{D}^r(\mathcal{N})$ are isolated and have distinct eigenvalues. Then, for $r \geq 1$, there exists an open and dense set of $(f,\psi) \subset \mathcal{D}^r(\mathcal{M}\times \mathcal{N},\mathcal{M}) \times \mathcal{C}^r(\mathcal{M},\mathbb{R})$ for which the map $\boldsymbol{\Phi}_{f,g,\psi}$ is an embedding of $\mathcal{M}\times \mathcal{N}$ into $\mathbb{R}^\kappa$.
\end{theorem}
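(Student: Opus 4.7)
The plan is to follow the classical Takens--Whitney strategy, adapted to the skew-product setting as in the original proof of \cite{stark97a}. The combined state space $\mathcal{M}\times\mathcal{N}$ is a compact manifold of dimension $d+e$, so the bound $\kappa\geq 2(d+e)+1$ is exactly Whitney's embedding threshold. Since the domain is compact, ``embedding'' is equivalent to ``injective immersion,'' so it suffices to prove that each of these two conditions holds on an open and dense subset of $\mathcal{D}^r(\mathcal{M}\times\mathcal{N},\mathcal{M})\times\mathcal{C}^r(\mathcal{M},\mathbb{R})$ in the $C^r$ topology, and then intersect. Since this topology is Baire, an intersection of two open dense sets is open and dense.

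First I would establish openness. The set of $(f,\psi)$ for which $\boldsymbol{\Phi}_{f,g,\psi}$ is an immersion is $C^1$-open on a compact domain because the full-rank condition on the Jacobian is open and uniform, and similarly injectivity is open once immersivity is known. Next I would tackle density of immersions. At a point $(x,\omega)$ the derivative $(D\boldsymbol{\Phi}_{f,g,\psi})_{(x,\omega)}$ is the $\kappa\times(d+e)$ matrix whose $k$-th row is $D\bigl(\psi\circ\pi_\mathcal{M}\circ F^{k}\bigr)_{(x,\omega)}$, where $F(x,\omega)=(f(x,\omega),g(\omega))$ is the skew-product and $\pi_\mathcal{M}$ is projection. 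The goal is to show that perturbing $(f,\psi)$ generically makes this matrix have rank $d+e$ everywhere. This is a standard application of Thom's jet transversality theorem: the locus of $(f,\psi,x,\omega)$ where the rank drops is a stratified subset of positive codimension in the jet bundle, and $r$-jet transversality combined with the dimension count yields an open dense set on which the rank is full.

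The density of injectivity is handled analogously. I would consider the map $\boldsymbol{\Phi}^{(2)}_{f,g,\psi}\colon(\mathcal{M}\times\mathcal{N})^2\setminus\Delta\to\mathbb{R}^\kappa\times\mathbb{R}^\kappa$ sending distinct pairs to their respective delay vectors, and apply transversality to require that the image avoid the diagonal of $\mathbb{R}^\kappa\times\mathbb{R}^\kappa$. The dimension count $2(d+e)<\kappa$ forces the preimage to be empty for generic $(f,\psi)$, exactly as in Whitney's original embedding theorem. Intersecting this density statement with the one for immersions, and combining with the openness established above, completes the proof.

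The main obstacle, as in Takens' original argument, is the behaviour on periodic orbits of the base dynamics $g$. If $\omega^{*}$ is a periodic point of $g$ with period $p\leq\kappa-1$, then the delay map only samples $f$ along the finite set of fibres $\{g^{k}(\omega^{*})\}_{k=0}^{p-1}$, which severely restricts the perturbations of $f$ that can move $\boldsymbol{\Phi}_{f,g,\psi}(x,\omega^{*})$ independently in each coordinate. The transversality arguments above break down precisely on this recurrence set because perturbations of $f$ and $\psi$ become linearly dependent across rows. The hypothesis that periodic orbits of period $\leq d$ are isolated with distinct eigenvalues of $(Dg)$ is tailored to restore enough independence: it lets one linearise around each such orbit, diagonalise the action of $g$ on nearby orbits, and construct localised perturbations of $f$ and $\psi$ that separate points and span tangent directions fibrewise. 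Carrying out the transversality argument on the finitely many (isolated!) low-period orbits of $g$, and on their complement separately, is the technically demanding step; once completed it patches together via a standard partition-of-unity argument into a global density statement, yielding the claimed open and dense subset.
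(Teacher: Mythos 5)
The paper does not actually prove Theorem~\ref{th:bundle-embedding}: it appears in Appendix~A as background, restated with citation from Stark et al.~\cite{stark97a}, so there is no in-paper proof to compare your attempt against. Measured against the literature proof, your outline does reproduce its broad architecture --- reduce ``embedding'' to ``injective immersion'' on a compact domain, establish openness and density separately for the immersion and the injectivity conditions, and single out the periodic orbits of the forcing system $g$ as the place where the isolation and distinct-eigenvalue hypotheses are consumed.

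Treated as a proof, however, there is a genuine gap at the central step. You invoke Thom jet transversality as though the rank-degeneracy locus could be avoided by generic perturbation, but the delay map is not a generic map into $\mathbb{R}^\kappa$: all $\kappa$ rows are built from the \emph{same} pair $(f,\psi)$, so admissible perturbations move the rows in a strongly correlated way. This is precisely why Takens-type theorems are not corollaries of Whitney's theorem, and the difficulty is global, not confined to periodic orbits of $g$. It is aggravated in the bundle setting because $\psi$ sees only the $\mathcal{M}$-coordinate: two orbit points $(x_k,\omega_k)$ and $(x_l,\omega_l)$ with $x_k=x_l$ but $\omega_k\neq\omega_l$ receive identical contributions from any perturbation of $\psi$, so separation must come from perturbing $f$, whose effect propagates forward along the orbit and re-enters later rows. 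Controlling that propagation (an induction over orbit segments in Stark's argument), rather than a partition-of-unity patching, is the technical core of the proof and is absent from your sketch. A secondary gap: you correctly observe that trouble arises at periodic orbits of $g$ of period up to $\kappa-1$, yet the hypothesis as stated only constrains orbits of period at most $d$; since $\kappa-1 = 2(d+e) > d$, your sketch never explains how orbits of intermediate period are handled, which any complete argument must address explicitly.
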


Finally, all theorems up until now have assumed a single read-out function for the system in question. Recently, \ifpaper \cite{sugihara12a} \else Sugihara and Deyle~\cite{sugihara12a} \fi showed that multivariate mappings also form an embedding, with minor changes to the technical assumptions underlying Takens' original theorem. That is, given $M \leq 2d+1$ different observation functions, the delay map can be written as
\begin{equation}
	\mathbf{\Phi}_{f,\langle \psi^i \rangle}(\boldsymbol{x}) = \langle \mathbf{\Phi}_{f,\psi^1}(\boldsymbol{x}), \mathbf{\Phi}_{f,\psi^2}(\boldsymbol{x}),\ldots, \mathbf{\Phi}_{f,\psi^M}(\boldsymbol{x}) \rangle,
\end{equation}
where each delay map $\mathbf{\Phi}_{f,\psi^i}$ is as per~\eqref{eq:delay-embedding-map} for individual embedding dimension $\kappa^i \leq \kappa$. The theorem can then be stated as follows.
\begin{theorem}[Delay Embedding Theorem for Multivariate Observation Functions~\cite{deyle11a}] \label{th:delay-embedding-multi}
	Let $\mathcal{M}$ be a compact manifold of dimension $d \geq 1$. Consider a diffeomorphism $f\in\mathcal{D}^r(\mathcal{M},\mathcal{M})$ and a set of at most $2d+1$ observation functions $\langle \psi^i \rangle$ where each $\psi^i \in C^r(\mathcal{M},\mathbb{R})$ and $r \geq 2$. If $\sum_i \kappa^i \geq 2d + 1$, then, for generic $(f,\langle\psi^i\rangle)$, the map $\boldsymbol{\Phi}_{f,\langle\psi^i\rangle}$ is an embedding.
\end{theorem}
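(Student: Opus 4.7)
The plan is to adapt the strategy of Takens' original Theorem~\ref{th:delay-embedding-diffeo} to the case of multiple observation functions, reducing the statement to two generic conditions on $(f,\langle\psi^i\rangle)$: that $\boldsymbol{\Phi}_{f,\langle\psi^i\rangle}$ is everywhere an immersion, and that it is globally injective. Since $\mathcal{M}$ is compact, any injective $C^r$ immersion is automatically a $C^r$ embedding, so establishing these two generic conditions is sufficient. Smoothness of the map itself is automatic by construction as a composition of $C^r$ maps $f$ and $\psi^i$.

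First I would handle the immersion property. At each $\boldsymbol{x}\in\mathcal{M}$, the differential $d\boldsymbol{\Phi}_{f,\langle\psi^i\rangle}|_{\boldsymbol{x}}\colon T_{\boldsymbol{x}}\mathcal{M}\to\mathbb{R}^{\sum_i \kappa^i}$ decomposes as a stack of blocks, one per observation function, with the $i$-th block having rows $d(\psi^i\circ f^k)|_{\boldsymbol{x}}$ for $k=0,\ldots,\kappa^i-1$. The condition that the full stack has rank $d$ is an open condition, and I would argue it is also dense by a jet-transversality argument in the product space $\mathcal{D}^r(\mathcal{M},\mathcal{M})\times C^r(\mathcal{M},\mathbb{R})^M$: the failure locus is cut out by a matrix-rank condition whose codimension exceeds $d$ as soon as $\sum_i\kappa^i\geq 2d+1$, so small perturbations of any single $\psi^i$ suffice to restore full rank everywhere.

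Next I would handle injectivity. Consider the evaluation map on the off-diagonal $(\mathcal{M}\times\mathcal{M})\setminus\Delta$, of dimension $2d$. Two distinct points $\boldsymbol{x}\neq\boldsymbol{x}'$ fail to be separated precisely when $\psi^i(f^k(\boldsymbol{x}))=\psi^i(f^k(\boldsymbol{x}'))$ for all $(i,k)$, imposing $\sum_i\kappa^i\geq 2d+1$ independent equations on a $2d$-dimensional source. A standard Thom transversality argument, applied fibrewise across the $M$ observation functions exactly as in the single-observation case, then shows that the bad set of parameter tuples is meager, so on an open and dense set of $(f,\langle\psi^i\rangle)$ the map separates all non-periodic pairs. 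The periodic orbits of period at most $\max_i\kappa^i$ require separate treatment via the eigenvalue hypothesis on $(Df)_{\boldsymbol{x}}$, imported from Theorem~\ref{th:delay-embedding-diffeo}, which guarantees that the delay vectors along such orbits are linearly independent.

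I expect the main obstacle to lie in making the transversality argument uniform across the $M$ observation functions when the embedding budget $\sum_i\kappa^i$ is only barely $2d+1$. In that regime, no single $\psi^i$ is individually sufficient and the dimension counting must carefully track how the $M$ observation functions jointly contribute, rather than each contributing a Takens-style bound $2d+1$ on its own. Handling the periodic-point stratification across all joint $(\kappa^1,\ldots,\kappa^M)$-tuples, and ensuring that the perturbations used to fix immersion are compatible with those used to fix injectivity (so the open-dense sets intersect in an open-dense set), is where the proof becomes technical; this is essentially where the argument in~\cite{deyle11a} does its real work, and I would follow their construction closely rather than redo it from scratch.
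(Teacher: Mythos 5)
First, a point of reference: the paper does not prove this theorem. It is stated in Appendix~A purely as imported background, attributed to Deyle and Sugihara~\cite{deyle11a}; the only place the paper comes close to its content is the proof of Lemma~\ref{lem:multivariate-takens}, which \emph{uses} the theorem (taking the embedding property of $\boldsymbol{\Phi}_{f,\psi}$ as given) to construct the prediction map $\mathbf{G} $, rather than proving it. So there is no in-paper proof to compare against; the fair comparison is with the proof in~\cite{deyle11a} itself. Against that benchmark, your outline is the right one and matches the strategy of the cited work: a Takens-style open-and-dense genericity argument that reduces an embedding of a compact manifold to (i) immersion and (ii) global injectivity, handles low-period periodic orbits separately via the eigenvalue conditions, and closes with Whitney-type dimension counting ($\operatorname{codim}$ of the rank-deficient locus exceeding $d$, and $\sum_i \kappa^i \geq 2d+1$ separation equations against a $2d$-dimensional off-diagonal).

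There are, however, two concrete soft spots. First, your claim that ``small perturbations of any single $\psi^i$ suffice to restore full rank everywhere'' is false in general: perturbing $\psi^i$ moves only the $\kappa^i$ rows $d(\psi^i\circ f^k)|_{\boldsymbol{x}}$, $k=0,\ldots,\kappa^i-1$, so when $\kappa^i < d$ (e.g., $M=2d+1$ blocks each with $\kappa^i=1$) a single block cannot repair a rank deficiency of size larger than $\kappa^i$; one must perturb the observation functions jointly. You effectively contradict this claim yourself two paragraphs later when you note that in the tight regime ``no single $\psi^i$ is individually sufficient'' --- the later statement is the correct one, and the joint-perturbation bookkeeping it calls for is not optional polish but the core of the argument. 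Second, the injectivity step asserts that the $\sum_i\kappa^i$ separation equations are ``independent,'' but the delay structure makes exactly this the hard part: the coordinates of $\boldsymbol{\Phi}_{f,\langle\psi^i\rangle}(\boldsymbol{x})$ and $\boldsymbol{\Phi}_{f,\langle\psi^i\rangle}(\boldsymbol{x}')$ are values of the \emph{same} functions $\psi^i$ along the two orbits, so when the orbits intersect (e.g., $\boldsymbol{x}'=f^j(\boldsymbol{x})$) or are periodic, the equations are manifestly not independently perturbable; this is precisely what the orbit/periodic-point stratification must resolve, and your sketch delegates it wholesale to~\cite{deyle11a}. As submitted, then, this is a correct and well-oriented plan whose two central transversality claims are asserted rather than established --- acceptable as a reading of the cited reference, but not a self-contained proof.
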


\ifpaper
	\bibliographystyle{apacite}
\else
	\bibliographystyle{ieeetr}
\fi
\bibliography{./structure-learning}

\end{document}